\documentclass[a4paper,11pt,reqno]{amsart}

\usepackage{graphicx,natbib,amssymb,datetime,float,MnSymbol,currfile}
\usepackage{tikz}
\usetikzlibrary{arrows}
\usetikzlibrary{decorations.markings}

\addtolength{\textheight}{3cm}
\addtolength{\topmargin}{-2cm}
\addtolength{\textwidth}{4cm}
\addtolength{\oddsidemargin}{-2cm}
\addtolength{\evensidemargin}{-2cm}

\newtheorem{theorem}{Theorem}
\newtheorem{lemma}{Lemma}

\def\ci{\!\perp\!}

\def\ra{\rightarrow}
\def\la{\leftarrow}
\def\aa{\leftrightarrow}

\def\ot{\mathrel{\reflectbox{\ensuremath{\multimap}}}}
\def\to{\multimap}
\newcommand{\comments}[1]{}

\tikzset{tt/.style={decoration={
  markings,
  mark=at position .485 with {\arrow{>}},
  mark=at position .515 with {\arrow{<}}},postaction={decorate}}}

\begin{document}

\title[]{Alternative Markov and Causal Properties for Acyclic Directed Mixed Graphs}

\author[]{Jose M. Pe\~{n}a\\
ADIT, IDA, Link\"oping University, SE-58183 Link\"{o}ping, Sweden\\
jose.m.pena@liu.se}

\date{\currenttime, \ddmmyydate{\today}, \currfilename}

\begin{abstract}
We extend Andersson-Madigan-Perlman chain graphs by (i) relaxing the semidirected acyclity constraint so that only directed cycles are forbidden, and (ii) allowing up to two edges between any pair of nodes. We introduce global, and ordered local and pairwise Markov properties for the new models. We show the equivalence of these properties for strictly positive probability distributions. We also show that when the random variables are continuous, the new models can be interpreted as systems of structural equations with correlated errors. This enables us to adapt Pearl's {\em do}-calculus to them. Finally, we describe an exact algorithm for learning the new models from observational and interventional data via answer set programming.
\end{abstract}

\maketitle

\section{\bf Introduction}

Chain graphs (CGs) are graphs with possibly directed and undirected edges but without semidirected cycles. They have been extensively studied as a formalism to represent probabilistic independence models, because they can model symmetric and asymmetric relationships between random variables. Moreover, they are much more expressive than directed acyclic graphs (DAGs) and undirected graphs (UGs) \citep{SonntagandPenna2016}. There are three different interpretations of CGs as independence models: The Lauritzen-Wermuth-Frydenberg (LWF) interpretation \citep{Lauritzen1996}, the multivariate regression (MVR) interpretation \citep{CoxandWermuth1996}, and the Andersson-Madigan-Perlman (AMP) interpretation \citep{Anderssonetal.2001}. No interpretation subsumes another \citep{Anderssonetal.2001,SonntagandPenna2015}. Moreover, AMP and MVR CGs are coherent with data generation by block-recursive normal linear regressions \citep{Anderssonetal.2001}.

\citet{Richardson2003} extends MVR CGs by (i) relaxing the semidirected acyclity constraint so that only directed cycles are forbidden, and (ii) allowing up to two edges between any pair of nodes. The resulting models are called acyclic directed mixed graphs (ADMGs). These are the models in which Pearl's {\em do}-calculus operate to determine if the causal effect of an intervention is identifiable from observed quantities \citep{Pearl2009}. In this paper, we make the same two extensions to AMP CGs. We call our ADMGs alternative as opposed to the ones proposed by Richardson, which we call original. It is worth mentioning that neither the original ADMGs nor any other family of mixed graphical models that we know of (e.g. summary graphs \citep{CoxandWermuth1996}, ancestral graphs \citep{RichardsonandSpirtes2002}, MC graphs \citep{Koster2002} or loopless mixed graphs \citep{SadeghiandLauritzen2014}) subsume AMP CGs and hence our alternative ADMGs. To see it, we refer the reader to the works by \citet[p. 1025]{RichardsonandSpirtes2002} and \citet[Section 4.1]{SadeghiandLauritzen2014}. Therefore, our work complements the existing works.

The rest of the paper is organized as follows. Section \ref{sec:preliminaries} introduces some preliminaries. Sections \ref{sec:global} and \ref{sec:local} introduce global, and ordered local and pairwise Markov properties for our ADMGs, and prove their equivalence. When the random variables are continuous, Section \ref{sec:interpretation} offers an intuitive interpretation of our ADMGs as systems of structural equations with correlated errors, so that Pearl's {\em do}-calculus can easily be adapted to them. Section \ref{sec:asp} describes an exact algorithm for learning our ADMGs from observational and interventional data via answer set programming \citep{gelfond_1988,DBLP:journals/amai/Niemela99,DBLP:journals/ai/SimonsNS02}. We close the paper with some discussion in Section \ref{sec:discussion}. Formal proofs of the claims made in this paper can be found in the appendix.

\section{\bf Preliminaries}\label{sec:preliminaries}

In this section, we introduce some concepts about graphical models. Unless otherwise stated, all the graphs and probability distributions in this paper are defined over a finite set $V$. The elements of $V$ are not distinguished from singletons. An ADMG $G$ is a graph with possibly directed and undirected edges but without directed cycles. There may be up to two edges between any pair of nodes, but in that case the edges must be different and one of them must be undirected to avoid directed cycles. Edges between a node and itself are not allowed. See Figure \ref{fig:example} for two examples of ADMGs.

\begin{figure}
\centering
\begin{tabular}{|c|c|}
\hline
\begin{tikzpicture}[inner sep=1mm]
\node at (0,0) (A) {$A$};
\node at (1,0) (B) {$B$};
\node at (2,0) (D) {$D$};

\path[->] (A) edge (B);
\path[->] (B) edge (D);
\path[-] (B) edge [bend left] (D);
\end{tikzpicture}
&
\begin{tikzpicture}[inner sep=1mm]
\node at (0,0) (A) {$A$};
\node at (1,0) (B) {$B$};
\node at (2,0) (C) {$C$};
\node at (3,0) (D) {$D$};

\path[->] (A) edge (B);
\path[->] (B) edge (C);
\path[->] (C) edge (D);
\path[-] (A) edge [bend left] (C);
\path[-] (B) edge [bend left] (D);
\end{tikzpicture}\\
\hline
\end{tabular}\caption{Examples of ADMGs.}\label{fig:example}
\end{figure}

Given an ADMG $G$, we represent with $A \to B$ that $A \ra B$ or $A - B$ (or both) is in $G$. The parents of $X \subseteq V$ in $G$ are $Pa_G(X) = \{A | A \ra B$ is in $G$ with $B \in X \}$. The children of $X$ in $G$ are $Ch_G(X) = \{A | A \la B$ is in $G$ with $B \in X \}$. The neighbours of $X$ in $G$ are $Ne_G(X) = \{A | A - B$ is in $G$ with $B \in X \}$. The ancestors of $X$ in $G$ are $An_G(X) = \{A | A \ra \ldots \ra B$ is in $G$ with $B \in X$ or $A \in X \}$. The descendants of $X$ in $G$ are $De_G(X) = \{A | A \la \ldots \la B$ is in $G$ with $B \in X$ or $A \in X \}$. The semidescendants of $X$ in $G$ are $de_G(X) = \{A | A \ot \ldots \ot B$ is in $G$ with $B \in X$ or $A \in X \}$. The non-semidescendants of $X$ in $G$ are $Nd_G(X) = V \setminus de_G(X)$. The connectivity component of $X$ in $G$ is $Cc_G(X) = \{A | A - \ldots - B$ is in $G$ with $B \in X$ or $A \in X \}$. The connectivity components in $G$ are denoted as $Cc(G)$. A route between a node $V_{1}$ and a node $V_{n}$ on $G$ is a sequence of (not necessarily distinct) nodes $V_{1}, \ldots, V_{n}$ such that $V_i$ and $V_{i+1}$ are adjacent in $G$ for all $1 \leq i < n$. If the nodes in the route are all distinct, then the route is called a path. Finally, the subgraph of $G$ induced by $X \subseteq V$, denoted as $G_X$, is the graph over $X$ that has all and only the edges in $G$ whose both ends are in $X$.

Every probability distribution $p$ satisfies the following four properties, where $X$, $Y$, $W$ and $Z$ disjoint subsets of $V$: Symmetry $X \ci_p Y | Z \Rightarrow Y \ci_p X | Z$, decomposition $X \ci_p Y \cup W | Z \Rightarrow X \ci_p Y | Z$, weak union $X \ci_p Y \cup W | Z \Rightarrow X \ci_p Y | Z \cup W$, and contraction $X \ci_p Y | Z \cup W \land X \ci_p W | Z \Rightarrow X \ci_p Y \cup W | Z$. If $p$ is strictly positive, then it also satisfies the intersection property $X \ci_p Y | Z \cup W \land X \ci_p W | Z \cup Y \Rightarrow X \ci_p Y \cup W | Z$. Some (not yet characterized) probability distributions also satisfy the composition property $X \ci_p Y | Z \land X \ci_p W | Z \Rightarrow X \ci_p Y \cup W | Z$.

\section{\bf Global Markov Property for ADMGs}\label{sec:global}

In this section, we introduce four separation criteria for ADMGs. Moreover, we show that they are all equivalent for strictly positive probability distributions. A probability distribution is said to satisfy the global Markov property with respect to an ADMG if every separation in the graph can be interpreted as an independence in the distribution.

{\bf Criterion 1}. A node $C$ on a path in an ADMG $G$ is said to be a collider on the path if $A \ra C \ot B$ is a subpath. Moreover, the path is said to be connecting given $Z \subseteq V$ when
\begin{itemize}
\item every collider on the path is in $An_G(Z)$, and

\item every non-collider $C$ on the path is outside $Z$ unless $A - C - B$ is a subpath and $Pa_G(C) \setminus Z \neq \emptyset$.
\end{itemize}

Let $X$, $Y$ and $Z$ denote three disjoint subsets of $V$. When there is no path in $G$ connecting a node in $X$ and a node in $Y$ given $Z$, we say that $X$ is separated from $Y$ given $Z$ in $G$ and denote it as $X \ci_G Y | Z$.

{\bf Criterion 2}. A node $C$ on a route in an ADMG $G$ is said to be a collider on the route if $A \ra C \ot B$ is a subroute. Note that maybe $A = B$. Moreover, the route is said to be connecting given $Z \subseteq V$ when
\begin{itemize}
\item every collider on the route is in $Z$, and

\item every non-collider $C$ on the route is outside $Z$.
\end{itemize}

Let $X$, $Y$ and $Z$ denote three disjoint subsets of $V$. When there is no route in $G$ connecting a node in $X$ and a node in $Y$ given $Z$, we say that $X$ is separated from $Y$ given $Z$ in $G$ and denote it as $X \ci_G Y | Z$.

{\bf Criterion 3}. Let $G^u$ denote the UG over $V$ that contains all and only the undirected edges in $G$. The extended subgraph $G[X]$ with $X \subseteq V$ is defined as $G[X] = G_{An_G(X)} \cup (G^u)_{Cc_G(An_G(X))}$. Two nodes $A$ and $B$ in $G$ are said to be collider connected if there is a path between them such that every non-endpoint node is a collider, i.e. $A \ra C \ot B$ or $A \ra C - D \la B$. Such a path is called a collider path. Note that a single edge forms a collider path. The augmented graph $G^a$ is the UG over $V$ such that $A - B$ is in $G^a$ if and only if $A$ and $B$ are collider connected in $G$. The edge $A - B$ is called augmented if it is in $G^a$ but $A$ and $B$ are not adjacent in $G$. A path in $G^a$ is said to be connecting given $Z \subseteq V$ if no node on the path is in $Z$. Let $X$, $Y$ and $Z$ denote three disjoint subsets of $V$. When there is no path in $G[X \cup Y \cup Z]^a$ connecting a node in $X$ and a node in $Y$ given $Z$, we say that $X$ is separated from $Y$ given $Z$ in $G$ and denote it as $X \ci_G Y | Z$.

{\bf Criterion 4}. Given an UG $H$ over $V$ and $X \subseteq V$, we define the marginal graph $H^X$ as the UG over $X$ such that $A - B$ is in $H^X$ if and only if $A - B$ is in $H$ or $A - V_1 - \ldots - V_n - B$ is $H$ with $V_1, \ldots, V_n \notin X$. We define the marginal extended subgraph $G[X]^m$ as $G[X]^m = G_{An_G(X)} \cup ((G^u)_{Cc_G(An_G(X))})^{An_G(X)}$. Let $X$, $Y$ and $Z$ denote three disjoint subsets of $V$. When there is no path in $(G[X \cup Y \cup Z]^m)^a$ connecting a node in $X$ and a node in $Y$ given $Z$, we say that $X$ is separated from $Y$ given $Z$ in $G$ and denote it as $X \ci_G Y | Z$.

The first three separation criteria introduced above coincide with those introduced by \citet{Anderssonetal.2001} and \citet{Levitzetal.2001} for AMP CGs. The equivalence for AMP CGs of these three criteria has been proven by \citet[Theorem 4.1]{Levitzetal.2001}. The following theorems prove the equivalence for ADMGs of the four separation criteria introduced above.

\begin{theorem}\label{the:1}
There is a path in an ADMG $G$ connecting a node in $X$ and a node in $Y$ given $Z$ if and only if there is a path in $G[X \cup Y \cup Z]^a$ connecting a node in $X$ and a node in $Y$ given $Z$.
\end{theorem}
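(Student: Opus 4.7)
The plan is to prove both directions separately, adapting the AMP CG argument of Levitz et al.\ to the ADMG setting (where up to two edges, one of which must be undirected, may join a pair of nodes).

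$(\Rightarrow)$ Given a connecting path $\pi$ from $x \in X$ to $y \in Y$ in $G$, I would first argue that every node of $\pi$ lies in the node set of $G[X \cup Y \cup Z]$: colliders are in $An_G(Z)$ by hypothesis, and from each non-collider one can trace an outgoing directed or undirected edge until reaching either an endpoint of $\pi$ or a collider. I would then compress $\pi$ by replacing each maximal subpath whose interior nodes are all colliders with the corresponding augmented edge in $G[X \cup Y \cup Z]^a$; after deleting loops the result is a path $\rho$ whose vertices are exactly the non-colliders of $\pi$. Finally, for any non-collider $C$ of $\pi$ sitting in $Z$, the connecting condition forces a subpath of the form $A - C - B$ with some $D \in Pa_G(C) \setminus Z$; since $A - C \la D$ and $D \ra C - B$ are then collider paths in $G[X \cup Y \cup Z]$, the edges $A - D$ and $D - B$ exist in $G[X \cup Y \cup Z]^a$ and I can reroute $\rho$ through $D$ in place of $C$.

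$(\Leftarrow)$ Given a connecting path $\rho$ in $G[X \cup Y \cup Z]^a$, I would expand each augmented edge back into its defining collider path in $G[X \cup Y \cup Z]$, obtaining a walk $\pi$ in $G$ whose non-colliders are the vertices of $\rho$ (and hence outside $Z$) and whose colliders, having at least one parent in the extended subgraph, lie in $An_G(X \cup Y \cup Z) = An_G(X) \cup An_G(Y) \cup An_G(Z)$. To upgrade these colliders to $An_G(Z)$, if some collider $C$ of $\pi$ is in, say, $An_G(X) \setminus An_G(Z)$, I would splice the portion of $\pi$ from $y$ back to $C$ with a directed path $C \ra \ldots \ra x' \in X$; along this directed path $C$ and the intermediate nodes become non-colliders, and none of them lies in $Z$ (otherwise $C$ would be in $An_G(Z)$). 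Iterating this replacement removes all bad colliders, after which a standard reduction to a simple path and a final check of the non-collider exception clause complete the argument.

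The main obstacle is the mismatch between the collider/non-collider conditions of Criterion 1 (which distinguish $An_G(Z)$ from $Z$ and admit an exception for undirected configurations) and the uniform ``no vertex in $Z$'' condition of Criterion 3. The parent-detour in the forward direction and the extension-to-$X \cup Y$ argument in the reverse direction are precisely what absorb this asymmetry; once these are in place the remaining bookkeeping (loop deletion, reindexing of the compressed path, verification that endpoints still lie in $X$ and $Y$) is routine.
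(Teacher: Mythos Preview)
Your overall strategy---compress collider segments to augmented edges in one direction, expand them back and repair bad colliders in the other---is exactly the paper's. Two points deserve attention.

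In the forward direction there is a genuine gap. When a non-collider $C\in Z$ sits on the subpath $A - C - B$ of $\pi$, the neighbours $A$ and $B$ need not be non-colliders of $\pi$; for instance $\pi$ may contain $A' \ra A - C$, making $A$ a collider. In that case $A$ is not a vertex of your compressed path $\rho$ at all, so you cannot ``reroute $\rho$ through $D$ in place of $C$'' using the augmented edge $A - D$. The paper handles this explicitly: the vertex of $\rho$ adjacent to $C$ is then some earlier non-collider $A'$, and one must check that $A' \ra \ldots \ra A - C \la D$ is still a collider path, giving the augmented edge $A' - D$. Your argument as written omits this case.

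In the reverse direction your route is slightly different from the paper's and a bit underspecified. The paper takes a \emph{shortest} path in $G[X\cup Y\cup Z]^a$ and uses a careful case analysis (its Lemma~2) to show that the expanded collider paths cannot overlap except trivially, so the expansion is already a path; then a separate lemma splices away colliders in $An_G(X\cup Y)\setminus An_G(Z)$, just as you do. You instead expand to a walk and defer to a ``standard reduction to a simple path''; that reduction is not entirely routine here, since deleting a loop can change the collider status of the surviving occurrence and one must re-verify the connecting conditions (this is precisely the seven-case analysis in the paper's Theorem~\ref{the:2}). Also, your sentence ``whose non-colliders are the vertices of $\rho$'' overstates: a vertex of $\rho$ can become a collider in the concatenated walk (e.g.\ if the two incident edges in $G[X\cup Y\cup Z]$ happen to be $\ra v \la$), though this does not break the argument since such a $v$ still has an incoming arrow in the extended subgraph and hence lies in $An_G(X\cup Y\cup Z)$.
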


\begin{theorem}\label{the:2}
There is a path in an ADMG $G$ connecting $A$ and $B$ given $Z$ if and only if there is a route in $G$ connecting $A$ and $B$ given $Z$.
\end{theorem}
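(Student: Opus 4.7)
\emph{Forward direction.} Starting from a path $P = A = V_0, V_1, \ldots, V_n = B$ connecting $A$ and $B$ given $Z$ in the sense of Criterion 1, I would locally modify $P$ into a route satisfying Criterion 2. Only two features of $P$ can violate Criterion 2: a collider may lie in $An_G(Z) \setminus Z$ instead of $Z$, and a non-collider with local pattern $V_{k-1} - V_k - V_{k+1}$ may lie in $Z$ via the Criterion 1 exception. At each such collider $V_k$ I would insert a round trip along a shortest directed path $V_k \ra W_1 \ra \cdots \ra W_\ell \ra z$ with $z \in Z$: the two visits to $V_k$ then become non-colliders outside $Z$, each $W_i$ is a non-collider outside $Z$ by minimality, and $z$ is a collider in $Z$. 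At each such non-collider $V_k \in Z$, picking $D \in Pa_G(V_k) \setminus Z$ (which exists by the exception), I would replace the step through $V_k$ by $V_{k-1} - V_k \la D \ra V_k - V_{k+1}$: both visits to $V_k$ become colliders in $Z$ while $D$ appears once as a non-collider outside $Z$. Executing these local replacements independently produces the desired Criterion 2 route.

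\emph{Reverse direction.} For the converse, given a route $R$ connecting $A$ and $B$ given $Z$ in the sense of Criterion 2, I would pass through the augmented-graph criterion via Theorem \ref{the:1}. First, every node of $R$ lies in $G[\{A,B\}\cup Z]$: colliders lie in $Z$ by hypothesis, while any non-collider either has a tail on some side, from which a directed chain in $R$ terminates at a collider in $Z$ or at one of $A, B$, or (in the $- V_k -$ case) lies on a maximal undirected segment of $R$ whose endpoints fall back to the previous case. Listing the non-colliders and endpoints of $R$ in order of traversal as $U_0 = A, U_1, \ldots, U_m = B$, the sub-route of $R$ joining each consecutive pair $U_i, U_{i+1}$ has all its interior nodes colliders in $Z$; after pruning repetitions it becomes a collider path in $G[\{A,B\}\cup Z]$, hence an augmented edge $U_i - U_{i+1}$ in $G[\{A,B\}\cup Z]^a$. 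The resulting walk $U_0 - U_1 - \cdots - U_m$ in this UG has no internal node in $Z$; shortcutting it to a simple path in a UG preserves that property, and Theorem \ref{the:1} then converts the resulting Criterion 3 connecting path back into a Criterion 1 connecting path in $G$.

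\emph{Main obstacle.} The delicate step is pruning each collider segment of $R$ down to a genuine collider path in $G[\{A,B\}\cup Z]$: a direct shortcut at a doubly visited interior node $C$ can merge an $\ra C -$ incidence from one visit with a $- C \ra$ incidence from another into a bare $- C -$ pattern and destroy the collider status at $C$. I would handle this by induction on segment length, rerouting through an arrowhead-bearing neighbour of $C$ (necessarily inside $G[\{A,B\}\cup Z]$ since it already appeared next to $C$ on the segment) whenever the straight shortcut fails to preserve the collider property.
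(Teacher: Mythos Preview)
Your proposal is correct, and the reverse direction takes a genuinely different route from the paper's.

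\emph{Forward direction.} The paper first replaces Criterion~2 by an ``equivalent'' relaxed route criterion and then declares the implication from a Criterion~1 path to a connecting route trivial; the two detour constructions are effectively absorbed into that asserted equivalence. Your explicit local replacements---the round trip to $z\in Z$ at a collider in $An_G(Z)\setminus Z$, and the bounce $V_{k-1}-V_k\la D\ra V_k-V_{k+1}$ at a $-V_k-$ non-collider in $Z$---are precisely those hidden steps made visible. Both versions are fine; yours is simply more explicit.

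\emph{Reverse direction.} Here the approaches really diverge. The paper stays in $G$: starting from a connecting route it repeatedly shortcuts at a doubly visited node $C$, checking through a seven-case analysis on the edge types flanking the two occurrences of $C$ that the shortened route is still connecting; iterating yields a path. You instead pass through Criterion~3: you place the route inside $G[\{A,B\}\cup Z]$, note that the sub-route between two consecutive non-colliders is a collider path there and hence an edge of $G[\{A,B\}\cup Z]^a$, shortcut the resulting undirected walk, and invoke Theorem~\ref{the:1}. Your argument is conceptually tidy but leans on Theorem~\ref{the:1}; the paper's is self-contained at the cost of the case split.

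One simplification you missed: your ``main obstacle'' is vacuous. In these ADMGs a collider is $\ra C\la$, $\ra C-$, or $-C\la$; as soon as a $\la$ edge leaves a collider the next node carries a tail on that side and cannot itself be a collider. Hence the sub-route between consecutive non-colliders has at most two interior nodes (at worst $U_i\ra C_1-C_2\la U_{i+1}$) and is already a path---no pruning of collider segments is ever needed.
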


\begin{theorem}\label{the:3}
Given an ADMG $G$, there is a path in $G[X \cup Y \cup Z]^a$ connecting a node in $X$ and a node in $Y$ given $Z$ if and only if there is a path in $(G[X \cup Y \cup Z]^m)^a$ connecting a node in $X$ and a node in $Y$ given $Z$.
\end{theorem}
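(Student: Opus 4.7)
The plan is to prove both directions by local surgery on paths, leveraging two structural facts. Writing $U = X \cup Y \cup Z$, $A = An_G(U)$, and $C = Cc_G(A)$, the first fact is that all directed edges of $G[U]$ come from $G_A$, so every node in $C \setminus A$ has only undirected incident edges in $G[U]$, whence every internal collider of a collider path in $G[U]$ must lie in $A$. The second is that two adjacent internal colliders of a collider path must be joined by an undirected edge (otherwise one of them would receive a tail on that side), which together with the requirement that each collider have at least one incident strict arrowhead forces every collider path to have length at most three.

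For the forward direction, I would split a given connecting $G[U]^a$-path $\pi$ at its $A$-nodes; along each segment $N_{j_0}, V_1, \ldots, V_l, N_{j_1}$ with $l \geq 1$ interior $C \setminus A$ nodes, I would replace the segment by a single $(G[U]^m)^a$-edge $N_{j_0} - N_{j_1}$ via a four-way case analysis on how the two boundary $G[U]^a$-edges are realised (either direct undirected edges of $(G^u)_C$, or length-two augmentations $N_{j_0} \ra C_0 - V_1$ and $V_l - C_{l+1} \la N_{j_1}$ with $C_0, C_{l+1} \in A$). In each case, marginalising the $V_j$'s out of the underlying $(G^u)_C$-path produces an edge in $((G^u)_C)^A$, which together with any adjacent $G_A$-edges from the augmentations assembles into a collider path of length at most three in $G[U]^m$ whose internal colliders lie in $A$. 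The remaining single $G[U]^a$-edges between consecutive $A$-nodes of $\pi$ transfer automatically, since their witnessing collider paths have internal colliders in $A$ and thus live in $G_A \subseteq G[U]^m$.

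For the reverse direction, I would instead expand each edge of a given $(G[U]^m)^a$-path into a subpath of $G[U]^a$. Single edges from $G_A$ transfer verbatim, while a single edge from $((G^u)_C)^A \setminus G_A$ expands into its defining $(G^u)_C$-path through nodes $V_j \in C \setminus A$, which are automatically outside $Z$ since $Z \subseteq A$. The delicate case is an augmented edge whose witnessing collider path in $G[U]^m$ contains a marginalised undirected edge; in the length-three pattern $N_i \ra P - P' \la N_{i+1}$ with $P - P'$ marginalised, the lift $N_i \ra P - V_1 - \ldots - V_l - P' \la N_{i+1}$ to $G[U]$ is not itself a collider path since the $V_j$'s have no incident arrowhead, yet the sequence $N_i, V_1, \ldots, V_l, N_{i+1}$ is still a valid $G[U]^a$-path: the edge $N_i - V_1$ is witnessed by the collider path $N_i \ra P - V_1$ and $V_l - N_{i+1}$ by $V_l - P' \la N_{i+1}$. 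The length-two patterns are handled analogously.

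The main obstacle is exactly this reverse construction when $P$ or $P'$ lies in $Z$: the naively lifted path passes through $Z$, so the plan must skip over $P, P'$ entirely using the shorter augmented edges above. The key point is that $P$ and $P'$ then feature only as internal colliders of augmented edges in $G[U]^a$, which does not count against avoiding $Z$ since the $G[U]^a$-path only visits $N_i$, the $V_j$'s, and $N_{i+1}$. Verifying that the short augmenting paths $N_i \ra P - V_1$ and $V_l - P' \la N_{i+1}$ actually lie in $G[U]$ is where the two structural observations become essential, because they ensure that the directed parts $N_i \ra P$ and $P' \la N_{i+1}$ are in $G_A$ while the undirected parts $P - V_1$ and $V_l - P'$ are in $(G^u)_C$.
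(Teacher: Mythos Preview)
Your proposal is correct. The reverse (if) direction is essentially the paper's own argument: the paper too replaces each edge $C - D$ of the $(G[X\cup Y\cup Z]^m)^a$-path that is not already in $G[X\cup Y\cup Z]^a$ by the subpath $C, V_1, \ldots, V_n, D$ through the marginalised nodes, using the short collider paths $C \ra E - V_1$ and $V_n - F \la D$ to realise the boundary augmented edges and skipping $E,F$ entirely---precisely your treatment of the case where $P$ or $P'$ might lie in $Z$.

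The forward (only if) direction, however, takes a genuinely different route. The paper does not operate on the $G[X\cup Y\cup Z]^a$-path directly. Instead it invokes Theorem~\ref{the:1} to pass to a Criterion-1 connecting path in $G$, then collapses each maximal undirected subpath through $Cc_G(An_G(X\cup Y\cup Z))\setminus An_G(X\cup Y\cup Z)$ to a single marginalised edge, obtaining a Criterion-1 connecting path in $G[X\cup Y\cup Z]^m$ (same colliders, a subset of the non-colliders), from which a $(G[X\cup Y\cup Z]^m)^a$-path is read off. Your argument stays entirely in the augmented-graph world: you split the $G[X\cup Y\cup Z]^a$-path at its $An_G(X\cup Y\cup Z)$-nodes and replace each segment by a single $(G[X\cup Y\cup Z]^m)^a$-edge via the four-way case analysis, powered by the two structural facts (internal colliders lie in $An_G(X\cup Y\cup Z)$; collider paths have length at most three). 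Your route is self-contained and avoids the detour through Criterion~1 and Lemmas~\ref{lem:ancestors}--\ref{lem:noncolliders}; the paper's route is terser once Theorem~\ref{the:1} is available, at the cost of threading the argument through the more delicate connecting-path notion.
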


Unlike in AMP CGs, two non-adjacent nodes in an ADMG are not necessarily separated. For example, $A \ci_G D | Z$ does not hold for any $Z$ in the ADMGs in Figure \ref{fig:example}. This drawback is shared by the original ADMGs \citep[p. 752]{EvansandRichardson2013}, summary graphs and MC graphs \citep[p. 1023]{RichardsonandSpirtes2002}, and ancestral graphs \citep[Section 3.7]{RichardsonandSpirtes2002}. For ancestral graphs, the problem can be solved by adding edges to the graph without altering the separations represented until every missing edge corresponds to a separation \citep[Section 5.1]{RichardsonandSpirtes2002}. A similar solution does not exist for our ADMGs (we omit the details).

\section{\bf Ordered Local and Pairwise Markov Properties for ADMGs}\label{sec:local}

In this section, we introduce ordered local and pairwise Markov properties for ADMGs. Given an ADMG $G$, the directed acyclity of $G$ implies that we can specify a total ordering ($\prec$) of the nodes of $G$ such that $A \prec B$ only if $B \notin An_G(A)$. Such an ordering is said to be consistent with $G$. Let the predecessors of $A$ with respect to $\prec$ be defined as $Pre_G(A, \prec) = \{ B | B \prec A$ or $B = A \}$. Given $S \subseteq V$, we define the Markov blanket of $B \in S$ with respect to $G[S]$ as $Mb_{G[S]}(B) = Ch_{G[S]}(B) \cup Ne_{G[S]}(B \cup Ch_{G[S]}(B)) \cup Pa_{G[S]}(B \cup Ch_{G[S]}(B) \cup Ne_{G[S]}(B \cup Ch_{G[S]}(B)))$. We say that a probability distribution $p$ satisfies the ordered local Markov property with respect to $G$ and $\prec$ if for any $A \in V$ and $S \subseteq Pre_G(A, \prec)$ such that $A \in S$
\[
B \ci_p S \setminus ( B \cup Mb_{G[S]}(B) ) | Mb_{G[S]}(B)
\]
for all $B \in S$.

\begin{theorem}\label{the:local}
Given a probability distribution $p$ satisfying the intersection property, $p$ satisfies the global Markov property with respect to an ADMG if and only if it satisfies the ordered local Markov property with respect to the ADMG and a consistent ordering of its nodes.
\end{theorem}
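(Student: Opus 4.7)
The strategy is to treat the two implications separately and to work with Criterion~3 (augmented graph of the extended subgraph) as the operational form of the global Markov property, since by Theorems \ref{the:1}--\ref{the:3} all four criteria coincide.

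For global $\Rightarrow$ ordered local, fix $A\in V$, $S$ with $A\in S\subseteq Pre_G(A,\prec)$, and $B\in S$. With $X=\{B\}$, $Y=S\setminus(B\cup Mb_{G[S]}(B))$ and $Z=Mb_{G[S]}(B)$, Criterion~3 reduces the desired independence to a purely graphical statement: in the augmented graph of the extended subgraph $G[X\cup Y\cup Z]$, no path from $B$ reaches $Y$ without hitting $Z$. The plan is to establish this via a structural lemma saying that every collider path out of $B$ that stays inside $S\cup Mb_{G[S]}(B)$ terminates in $Mb_{G[S]}(B)$; this is done by enumerating the possible shapes of such paths and checking that their far endpoints are captured by the three pieces $Ch_{G[S]}(B)$, $Ne_{G[S]}(B\cup Ch_{G[S]}(B))$ and $Pa_{G[S]}(B\cup Ch_{G[S]}(B)\cup Ne_{G[S]}(B\cup Ch_{G[S]}(B)))$ of the blanket formula.

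For ordered local $\Rightarrow$ global, assume in addition that $p$ satisfies intersection and proceed by induction on $|V\setminus(X\cup Y\cup Z)|$ to derive $X\ci_p Y | Z$ from $X\ci_G Y | Z$. In the base case $X\cup Y\cup Z=V$, take $A$ to be $\prec$-maximal so that $V\subseteq Pre_G(A,\prec)$, and apply the ordered local property with $S=V$; this gives, for every $B\in V$, a ``node vs.\ rest'' independence. These are then assembled into arbitrary $X\ci_p Y | Z$ using symmetry, decomposition, weak union, contraction and intersection, in the same way as in the equivalence proof of \citet[Theorem 4.1]{Levitzetal.2001} for AMP chain graphs, once the graphical separation is read off $G[V]^a$. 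In the inductive step, one selects a $\prec$-maximal node $C\notin X\cup Y\cup Z$, applies the local property with an $S$ containing $C$, and uses contraction and intersection to absorb $C$ and invoke the induction hypothesis on a smaller set.

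The main difficulty is the reverse direction. The Markov blanket is computed with respect to the \emph{extended} subgraph $G[S]$ rather than the induced subgraph $G_S$, while the consistent ordering $\prec$ only controls directed paths; consequently, neither object interacts cleanly with the undirected connectivity structure that drives the augmented graph, and the extension from AMP chain graphs to ADMGs has to handle the new semidirected cycles that $\prec$ no longer rules out. The intersection property is needed at essentially every combination step, and the technical bulk of the proof lies in arranging these combinations so that the blocking structure of $G[X\cup Y\cup Z]^a$ is reproduced faithfully in probabilistic terms without introducing an independence the graph does not entail.
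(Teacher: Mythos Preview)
Your forward direction (global $\Rightarrow$ ordered local) is essentially the paper's argument: the paper phrases it as ``every node adjacent to $B$ in $G[S]^a$ lies in $Mb_{G[S]}(B)$'', which is exactly the collider-path enumeration you describe, and then reads off the separation in $G[S]^a$ via Criterion~3.

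For the reverse direction, however, you have overlooked the simplification that makes the paper's proof a two-line argument. The whole point of the ordered local property being indexed by \emph{all} $S\subseteq Pre_G(A,\prec)$ is that, given a target separation $X\ci_G Y\mid Z$, one may take $A$ to be the $\prec$-latest element of $X\cup Y\cup Z$ and set $S=X\cup Y\cup Z$ directly. For this choice the ordered local property hands you, for every $B\in S$, the independence $B\ci_p S\setminus(B\cup Mb_{G[S]}(B))\mid Mb_{G[S]}(B)$, and since $Mb_{G[S]}(B)$ equals the neighbour set of $B$ in the undirected graph $G[S]^a$, this is precisely the local Markov property for $G[S]^a$. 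A single appeal to \citet[Theorem~3.7]{Lauritzen1996} (local $\Leftrightarrow$ global for UGs under intersection) then yields the separation of $X$ from $Y$ by $Z$ in $G[S]^a$, which is $X\ci_p Y\mid Z$ by Criterion~3. No induction on $|V\setminus(X\cup Y\cup Z)|$ is needed; your base case $S=V$ is only one instance of a scheme that already covers every $S$.

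Your proposed inductive step is also under-specified in a way that matters. You say to pick a $\prec$-maximal $C\notin X\cup Y\cup Z$ and ``apply the local property with an $S$ containing $C$'', but you do not say which $S$, nor which independence you extract, nor how contraction/intersection is supposed to reduce the problem. In particular, absorbing $C$ changes the extended subgraph (and hence the augmented graph) against which Criterion~3 is read, so one cannot simply reuse the separation $X\ci_G Y\mid Z$ after enlarging the triple; one would first have to argue something like $X\cup C\ci_G Y\mid Z$ or $X\ci_G Y\cup C\mid Z$. That is doable for these criteria, but it is work you have not indicated, and it is entirely avoided by the paper's direct choice of $S$.
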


Similarly, we say that a probability distribution $p$ satisfies the ordered pairwise Markov property with respect to $G$ and $\prec$ if for any $A \in V$ and $S \subseteq Pre_G(A, \prec)$ such that $A \in S$
\[
B \ci_p C | V(G[S]) \setminus ( B \cup C )
\]
for all nodes $B, C \in S$ that are not adjacent in $G[S]^a$, and where $V(G[S])$ denotes the nodes in $G[S]$.

\begin{theorem}\label{the:pairwise}
Given a probability distribution $p$ satisfying the intersection property, $p$ satisfies the global Markov property with respect to an ADMG if and only if it satisfies the ordered pairwise Markov property with respect to the ADMG and a consistent ordering of its nodes.
\end{theorem}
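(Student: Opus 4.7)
The plan is to invoke Theorem~\ref{the:local} and reduce the claim to showing, under the intersection property, the equivalence of the ordered local and the ordered pairwise Markov properties with respect to $(G, \prec)$.

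The key structural lemma I would first establish is: for any ADMG $H$, the Markov blanket $Mb_H(B)$ equals the set of neighbours of $B$ in the augmented graph $H^a$. The point is that collider paths in an ADMG have length at most three: an internal collider may have no tail incident to it, and two consecutive undirected edges cannot meet at a collider, which together rule out longer configurations. The admissible collider paths---single edges; $B \ra V \la C$, $B \ra V - C$, $B - V \la C$; and $B \ra V - W \la C$---correspond one-to-one to the three layers $Ch$, $Ne$, $Pa$ in the definition of the Markov blanket.

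Granting this lemma, for local $\Rightarrow$ pairwise: if $B, C \in S$ are non-adjacent in $G[S]^a$ then $C \in S \setminus (B \cup Mb_{G[S]}(B))$, so decomposition and weak union applied to the local CI yield $B \ci_p C | S \setminus (B \cup C)$; this is extended to the conditioning $V(G[S]) \setminus (B \cup C)$ required by pairwise by re-applying the local property at an enlarged set $S' \supseteq V(G[S])$ with $S' \subseteq Pre_G(A', \prec)$ for $A' = \max_{\prec} V(G[S])$. Conversely, pairwise $\Rightarrow$ local proceeds by enumerating $S \setminus (B \cup Mb_{G[S]}(B)) = \{C_1, \ldots, C_k\}$ and iteratively combining the pairwise CIs $B \ci_p C_i | V(G[S]) \setminus (B \cup C_i)$ through the intersection property, with a final contraction-decomposition step using pairwise at an enlarged set to strip any residual conditioning on $V(G[S]) \setminus S$.

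The principal obstacle is the structural lemma identifying $Mb_H(B)$ with the neighbours of $B$ in $H^a$: while the collider-path length bound is a clean fact specific to the AMP-style collider semantics used here, matching each configuration to the correct layer of the Markov blanket demands a careful case analysis. A secondary technicality is the delicate use of graphoid axioms---decomposition, weak union, contraction, and intersection---together with the right choice of ordered sets needed to reconcile the conditioning sets appearing in the local and pairwise CIs.
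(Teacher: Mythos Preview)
The paper does not pass through the ordered local property at all; it proves global $\Leftrightarrow$ pairwise directly. For the only-if direction, non-adjacency of $B$ and $C$ in $G[S]^a$ means they are separated there given $V(G[S]) \setminus (B \cup C)$, so Criterion~3 together with the global Markov property yield the required CI immediately. For the if direction, given disjoint $X, Y, Z$ one takes $S = X \cup Y \cup Z$ with $A = \max_\prec S$; the ordered pairwise CIs are then pairwise Markov statements for the undirected graph $G[S]^a$, and Lauritzen's Theorem~3.7 converts them to the desired global statement in one stroke. Your structural lemma identifying $Mb_{G[S]}(B)$ with the neighbourhood of $B$ in $G[S]^a$ is correct and is precisely what the paper uses in proving Theorem~\ref{the:local}; the observation that collider paths in these ADMGs have length at most three is the clean way to see it.

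The difficulty with your route lies in the ``enlarged set'' manoeuvre, which does not go through as written. In local~$\Rightarrow$~pairwise, weak union on the local CI at $S$ gives conditioning on $Mb_{G[S]}(B) \cup \bigl(S \setminus (B \cup C \cup Mb_{G[S]}(B))\bigr)$, which is not $S \setminus (B \cup C)$ whenever $Mb_{G[S]}(B) \not\subseteq S$ (and $Mb_{G[S]}(B)$ can contain nodes of $V(G[S]) \setminus S$). Passing to an enlarged $S' \supseteq V(G[S])$ changes the extended subgraph: nodes in $Cc_G(An_G(S)) \setminus An_G(S)$ may have ancestors outside $An_G(S)$, so in general $An_G(S') \supsetneq An_G(S)$ and hence $G[S']^a \neq G[S]^a$, meaning the local CI you obtain at $S'$ refers to a different Markov blanket. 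The same obstruction hits pairwise~$\Rightarrow$~local: iterating intersection over the $C_i \in S \setminus (B \cup Mb_{G[S]}(B))$ leaves you with $B \ci_p S \setminus (B \cup Mb_{G[S]}(B)) \mid Mb_{G[S]}(B) \cup (V(G[S]) \setminus S)$, and stripping the residual $V(G[S]) \setminus S$ would require pairwise CIs with $C \in V(G[S]) \setminus S$, which the ordered pairwise property does not supply since it is quantified only over $B, C \in S$. The paper sidesteps all of this by treating $G[S]^a$ as an undirected graph from the outset and invoking the classical UG equivalence once.
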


For each $A \in V$ and $S \subseteq Pre_G(A, \prec)$ such that $A \in S$, the ordered local Markov property specifies an independence for each $B \in S$. The number of independences to specify can be reduced by noting that $G[S] = G[An_G(S)]$ and, thus, we do not need to consider every set $S \subseteq Pre_G(A, \prec)$ but only those that are ancestral, i.e. those such that $S = An_G(S)$. The number of independences to specify can be further reduced by considering only maximal ancestral sets, i.e. those sets $S$ such that $Mb_{G[S]}(B) \subset Mb_{G[T]}(B)$ for every ancestral set $T$ such that $S \subset T \subseteq Pre_G(A, \prec)$. The independences for the non-maximal ancestral sets follow from the independences for the maximal ancestral sets by decomposition. A characterization of the maximal ancestral sets is possible but notationally cumbersome (we omit the details). All in all, for each node and maximal ancestral set, the ordered local Markov property specifies an independence for each node in the set. This number is greater than for the original ADMGs, where a single independence is specified for each node and maximal ancestral set \citep[Section 3.1]{Richardson2003}. Even fewer independences are needed for the original ADMGs when interpreted as linear causal models \citep[Section 4]{KangandTian2009}. All in all, our ordered local Markov property serves its purpose, namely to identify a subset of the independences in the global Markov property that implies the rest.

Note that \citet[Theorem 3]{Anderssonetal.2001} describe local and pairwise Markov properties for AMP CGs that are equivalent to the global one under the assumption of the intersection and composition properties. Our ordered local and pairwise Markov properties above only require assuming the intersection property. Note that this assumption is also needed to prove similar results for much simpler models such as UGs \citep[Theorem 3.7]{Lauritzen1996}. For AMP CGs, however, we can do better than just using the ordered local and pairwise Markov properties for ADMGs above. Specifically, we introduce in the next section neater local and pairwise Markov properties for AMP CGs under the intersection property assumption. Later on, we will also use them to prove some results for ADMGs.

\subsection{Local and Pairwise Markov Properties for AMP CGs}

\citet[Theorem 2]{Anderssonetal.2001} introduce the following block-recursive Markov property. A probability distribution $p$ satisfies the global Markov property with respect to an AMP CG $G$ if and only if the following three properties hold for all $C \in Cc(G)$:
\begin{itemize}
\item C1: $C \ci_p Nd_G(C) \setminus Cc_G(Pa_G(C)) | Cc_G(Pa_G(C))$.

\item C2: $p(C | Cc_G(Pa_G(C)))$ satisfies the global Markov property with respect to $G_C$.

\item C3$^*$: $D \ci_p Cc_G(Pa_G(C)) \setminus Pa_G(D) | Pa_G(D)$ for all $D \subseteq C$.
\end{itemize}

We simplify the block-recursive Markov property as follows.

\begin{theorem}\label{the:blockamp}
C1, C2 and C3$^*$ hold if and only if the following two properties hold:
\begin{itemize}
\item C1$^*$: $D \ci_p Nd_G(D) \setminus Pa_G(D) | Pa_G(D)$ for all $D \subseteq C$.

\item C2$^*$: $p(C | Pa_G(C))$ satisfies the global Markov property with respect to $G_C$.
\end{itemize}
\end{theorem}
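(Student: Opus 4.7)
The plan is to establish both implications by direct calculations with the graphoid axioms listed at the end of Section~\ref{sec:preliminaries}. Before doing so, I would record the following purely graphical observation, which I will use several times: whenever $\emptyset \neq D \subseteq C$ with $C \in Cc(G)$, we have $Nd_G(D) = Nd_G(C)$. The reason is that $Cc_G(D) = C$ because $D$ lies in a single connectivity component, so $C \subseteq de_G(D)$, and by following directed edges out of $C$ one obtains $de_G(D) = de_G(C)$. I would also note the set-containment chain $Pa_G(D) \subseteq Pa_G(C) \subseteq Cc_G(Pa_G(C)) \subseteq Nd_G(C)$, the last inclusion holding because a connectivity component that contains a parent of $C$ cannot be a semidescendant of $C$ in the absence of directed cycles.

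For the direction C1$^*$, C2$^*$ $\Rightarrow$ C1, C2, C3$^*$ no use of intersection is needed. Property C1 is C1$^*$ applied at $D = C$ followed by a single weak union that shifts $Cc_G(Pa_G(C)) \setminus Pa_G(C)$ from the independent side into the conditioning set. Property C3$^*$ is an immediate decomposition of C1$^*$, using $Cc_G(Pa_G(C)) \setminus Pa_G(D) \subseteq Nd_G(D) \setminus Pa_G(D)$. For C2, fix disjoint $X, Y, Z \subseteq C$ with $X \ci_{G_C} Y \mid Z$. C2$^*$ yields $X \ci_p Y \mid Z \cup Pa_G(C)$. Applying the just-derived C3$^*$ with $D = X \cup Y \cup Z$ and two weak unions produces $X \ci_p Cc_G(Pa_G(C)) \setminus Pa_G(C) \mid Pa_G(C) \cup Z \cup Y$. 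Contraction fuses these into $X \ci_p Y \cup (Cc_G(Pa_G(C)) \setminus Pa_G(C)) \mid Pa_G(C) \cup Z$, and a final weak union converts this into the desired $X \ci_p Y \mid Z \cup Cc_G(Pa_G(C))$.

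For the direction C1, C2, C3$^*$ $\Rightarrow$ C1$^*$, C2$^*$ intersection is required. Property C1$^*$ at empty $D$ is trivial; for nonempty $D \subseteq C$, decomposition of C1 gives $D \ci_p Nd_G(C) \setminus Cc_G(Pa_G(C)) \mid Cc_G(Pa_G(C))$, and C3$^*$ is $D \ci_p Cc_G(Pa_G(C)) \setminus Pa_G(D) \mid Pa_G(D)$; contraction fuses them into $D \ci_p Nd_G(C) \setminus Pa_G(D) \mid Pa_G(D)$, which equals C1$^*$ via $Nd_G(D) = Nd_G(C)$. For C2$^*$, begin with C2 giving $X \ci_p Y \mid Z \cup Cc_G(Pa_G(C))$; the same manipulation of C3$^*$ as in the reverse direction yields $X \ci_p Cc_G(Pa_G(C)) \setminus Pa_G(C) \mid Pa_G(C) \cup Z \cup Y$, and intersection produces $X \ci_p Y \cup (Cc_G(Pa_G(C)) \setminus Pa_G(C)) \mid Pa_G(C) \cup Z$, from which decomposition extracts $X \ci_p Y \mid Z \cup Pa_G(C)$.

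The only delicate part is the bookkeeping of set-algebra identities that make the contractions telescope, most notably $(Nd_G(C) \setminus Cc_G(Pa_G(C))) \cup (Cc_G(Pa_G(C)) \setminus Pa_G(D)) = Nd_G(C) \setminus Pa_G(D)$ for C1$^*$, which relies on the chain $Pa_G(D) \subseteq Cc_G(Pa_G(C)) \subseteq Nd_G(C)$; and observing that the intersection property is invoked exactly once, namely in the C2 $\Rightarrow$ C2$^*$ step where a block of variables must be \emph{removed} from the conditioning set. Everything else is a mechanical application of the standard graphoid rules.
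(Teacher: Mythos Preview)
Your derivation of the equivalence between $\mathrm{C}1^{*}$ and the pair $(\mathrm{C}1,\mathrm{C}3^{*})$ matches the paper's proof exactly, including the key graphical fact $Nd_G(D)=Nd_G(C)$ and the single contraction that telescopes $\mathrm{C}1$ and $\mathrm{C}3^{*}$ into $\mathrm{C}1^{*}$.

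Where you diverge is in the treatment of $\mathrm{C}2 \Leftrightarrow \mathrm{C}2^{*}$. The paper does not manipulate individual separation statements at all here. Instead, it observes that once $\mathrm{C}1^{*}$ is available (assumed in one direction, derived from $\mathrm{C}1$ and $\mathrm{C}3^{*}$ in the other), decomposition at $D=C$ gives $C \ci_p Cc_G(Pa_G(C)) \setminus Pa_G(C) \mid Pa_G(C)$, i.e.\ $p(C \mid Pa_G(C)) = p(C \mid Cc_G(Pa_G(C)))$ as conditional distributions. Since $\mathrm{C}2$ and $\mathrm{C}2^{*}$ assert the same global Markov property of two distributions that are in fact equal, the equivalence is immediate. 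This route uses only decomposition, so the paper's proof never invokes the intersection property; the theorem therefore holds for \emph{every} probability distribution, not only strictly positive ones.

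Your graphoid-level argument for $\mathrm{C}2 \Rightarrow \mathrm{C}2^{*}$ is valid, but because it works statement-by-statement it is forced to call intersection to strip $Cc_G(Pa_G(C)) \setminus Pa_G(C)$ out of the conditioning set. Consequently your remark that ``the intersection property is invoked exactly once'' and is needed is incorrect: the paper's distributional shortcut eliminates that invocation entirely. Your proof as written thus establishes the theorem only under the extra hypothesis that $p$ satisfies intersection, which is weaker than what is stated. Replacing your $\mathrm{C}2 \Leftrightarrow \mathrm{C}2^{*}$ paragraph with the one-line observation about equality of conditional laws fixes this and also shortens the argument.
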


\citet[Theorem 3]{Anderssonetal.2001} also introduce the following local Markov property. A probability distribution $p$ satisfying the intersection and composition properties satisfies the global Markov property with respect to an AMP CG $G$ if and only if the following two properties hold for all $C \in Cc(G)$:
\begin{itemize}
\item L1: $A \ci_p C \setminus (A \cup Ne_G(A)) | Nd_G(C) \cup Ne_G(A)$ for all $A \in C$.

\item L2: $A \ci_p Nd_G(C) \setminus Pa_G(A) | Pa_G(A)$ for all $A \in C$.
\end{itemize}

We introduce below a local Markov property that is equivalent to the global one under the assumption of the intersection property only.

\begin{theorem}\label{the:localamp}
A probability distribution $p$ satisfying the intersection property satisfies the global Markov property with respect to an AMP CG $G$ if and only if the following two properties hold for all $C \in Cc(G)$:
\begin{itemize}
\item L1: $A \ci_p C \setminus (A \cup Ne_G(A)) | Nd_G(C) \cup Ne_G(A)$ for all $A \in C$.

\item L2$^*$: $A \ci_p Nd_G(C) \setminus Pa_G(A \cup S) | S \cup Pa_G(A \cup S)$ for all $A \in C$ and $S \subseteq C \setminus A$.
\end{itemize}
\end{theorem}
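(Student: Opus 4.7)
The plan is to use Theorem \ref{the:blockamp} as a bridge: it reduces the global Markov property to the conjunction over $C \in Cc(G)$ of C1* and C2*, so the task becomes showing that L1 and L2* are interderivable with C1* and C2*. Throughout I will use the observation that, for any nonempty $D \subseteq C$, the internal undirected connectedness of $C$ forces $de_G(D) = de_G(C)$ and hence $Nd_G(D) = Nd_G(C)$, so that C1* reads as $D \ci_p Nd_G(C) \setminus Pa_G(D) | Pa_G(D)$ for every $D \subseteq C$.

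For the forward direction, I would assume the global Markov property, obtain C1* and C2* from Theorem \ref{the:blockamp}, and derive L1 and L2* using only the semigraphoid axioms. L2* falls out of C1* applied to $D = A \cup S \subseteq C$ by pushing $S$ into the conditioning with weak union. For L1, I would set $Y = C \setminus (A \cup Ne_G(A))$, $Z = Ne_G(A) \cup Pa_G(C)$, and $W = Nd_G(C) \setminus Pa_G(C)$: reading C2* as the UG local Markov property for $p(C | Pa_G(C))$ yields $A \ci_p Y | Z$, while C1* with $D = C$ together with weak union yields $A \ci_p W | Z \cup Y$. Contraction then produces $A \ci_p Y \cup W | Z$, and weak union delivers $A \ci_p Y | Z \cup W = Ne_G(A) \cup Nd_G(C)$, which is L1.

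For the reverse direction, I would assume L1, L2*, and the intersection property. The first step is to upgrade L2* to C1*. Fix $D \subseteq C$ and abbreviate $Y = Nd_G(C) \setminus Pa_G(D)$; L2* with $S = D \setminus A$ gives $A \ci_p Y | (D \setminus A) \cup Pa_G(D)$ for every $A \in D$. I would then prove by induction on $|D'|$ that $D' \ci_p Y | (D \setminus D') \cup Pa_G(D)$ for every nonempty $D' \subseteq D$, combining one element $A \in D'$ with $D' \setminus A$ via the intersection property in the induction step; taking $D' = D$ yields C1*. The second step is to extract C2*. Reusing $(Y, Z, W)$ from the forward analysis, L1 reads $A \ci_p Y | Z \cup W$ and C1* with $D = C$ (plus weak union) reads $A \ci_p W | Z \cup Y$, so intersection followed by decomposition gives $A \ci_p Y | Z$ for each $A \in C$. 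This is the UG local Markov property for $p(C | Pa_G(C))$ with respect to $G_C$, which the classical local-to-global equivalence for undirected graphs under the intersection property \citep[Theorem 3.7]{Lauritzen1996} then upgrades to the global UG Markov property, i.e.\ C2*. Applying Theorem \ref{the:blockamp} closes the argument.

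The hard part is the reverse direction: L2* is strengthened beyond L2 precisely so that repeated intersection can reassemble its one-variable instances into the joint statement C1*, and the reconstructed C1* is then exactly what is needed both to prune the non-parent non-semidescendants out of the conditioning of L1 and to trigger the UG local-to-global step. Locating the right combinatorial bookkeeping---the induction over subsets $D' \subseteq D$ with conditioning $(D \setminus D') \cup Pa_G(D)$---is where the work concentrates; without L2* the single-node independences alone, together with L1, would not suffice to avoid invoking composition.
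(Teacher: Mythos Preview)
Your proposal is correct and follows essentially the same route as the paper: Theorem~\ref{the:blockamp} is used as the bridge to C1$^*$ and C2$^*$, the forward direction is handled by semigraphoid manipulations, and in the reverse direction C1$^*$ is rebuilt from the one-variable L2$^*$ instances by repeated intersection, while C2$^*$ is obtained by combining L1 with an appropriate non-semidescendant independence and then invoking \citet[Theorem~3.7]{Lauritzen1996}.

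The only difference worth recording is the order and the axiom used in the reverse derivation of C2$^*$. The paper obtains C2$^*$ \emph{first}, directly from L2$^*$ with the specific choice $S=Ne_G(A)$: after weak union this gives $A \ci_p Nd_G(C)\setminus Pa_G(C)\mid Ne_G(A)\cup Pa_G(C)$, i.e.\ $A\ci_p W\mid Z$ in your notation, and then L1 plus \emph{contraction} (not intersection) and decomposition yield $A\ci_p Y\mid Z$. You instead derive C1$^*$ first and then use C1$^*$ with $D=C$ to get $A\ci_p W\mid Z\cup Y$, which forces an extra appeal to intersection when combining with L1. Both arguments are valid under the standing intersection assumption; the paper's ordering simply avoids one use of intersection by exploiting L2$^*$ at the well-chosen $S=Ne_G(A)$.
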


Finally, \citet[Theorem 3]{Anderssonetal.2001} also introduce the following pairwise Markov property. A probability distribution $p$ satisfying the intersection and composition properties satisfies the global Markov property with respect to an AMP CG $G$ if and only if the following two properties hold for all $C \in Cc(G)$:
\begin{itemize}
\item P1: $A \ci_p B | Nd_G(C) \cup C \setminus (A \cup B)$ for all $A \in C$ and $B \in C \setminus (A \cup Ne_G(A))$.

\item P2: $A \ci_p B | Nd_G(C) \setminus B$ for all $A \in C$ and $B \in Nd_G(C) \setminus Pa_G(A)$.
\end{itemize}

We introduce below a pairwise Markov property that is equivalent to the global one under the assumption of the intersection property only.

\begin{theorem}\label{the:pairwiseamp}
A probability distribution $p$ satisfying the intersection property satisfies the global Markov property with respect to an AMP CG $G$ if and only if the following two properties hold for all $C \in Cc(G)$:
\begin{itemize}
\item P1: $A \ci_p B | Nd_G(C) \cup C \setminus (A \cup B)$ for all $A \in C$ and $B \in C \setminus (A \cup Ne_G(A))$.

\item P2$^*$: $A \ci_p B | S \cup Nd_G(C) \setminus B$ for all $A \in C$, $S \subseteq C \setminus A$ and $B \in Nd_G(C) \setminus Pa_G(A \cup S)$.
\end{itemize}
\end{theorem}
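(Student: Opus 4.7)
I prove Theorem \ref{the:pairwiseamp} by round-tripping through the local characterization of Theorem \ref{the:localamp}, which under the same intersection-only hypothesis asserts that the global Markov property is equivalent to L1 and L2$^*$. The two directions are: (i) global implies P1 and P2$^*$, verified directly from AMP separation in $G$; (ii) P1 together with P2$^*$ imply L1 together with L2$^*$ using only the intersection property, after which Theorem \ref{the:localamp} closes the argument.

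\textbf{Forward direction.} Assume $p$ is globally Markov with respect to $G$. P1 is the standard AMP pairwise statement for non-adjacent nodes of a single connectivity component and is already known to follow from the global Markov property \citep{Anderssonetal.2001}. For P2$^*$, fix $A \in C$, $S \subseteq C \setminus A$, and $B \in Nd_G(C) \setminus Pa_G(A \cup S)$, and set $Z = S \cup (Nd_G(C) \setminus B)$. I would verify, via Criterion 1, that no path in $G$ connects $A$ to $B$ given $Z$: any such path must leave $C$ through a parent edge into $Nd_G(C)$, and because $B \notin Pa_G(A \cup S)$, every attempt to return to $A \cup S$ is blocked either by a non-collider lying in $Nd_G(C) \setminus B \subseteq Z$ (without the $-\!-\!-$ escape clause being available) or by a collider outside $An_G(Z)$.

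\textbf{Reverse direction.} Suppose P1 and P2$^*$ hold. Fix $C \in Cc(G)$ and $A \in C$. To derive L1, enumerate $C \setminus (A \cup Ne_G(A)) = \{B_1, \ldots, B_k\}$ and set $Z_0 = Nd_G(C) \cup Ne_G(A)$. P1 yields $A \ci_p B_i | Z_0 \cup \{B_j : j \neq i\}$ for every $i$. An induction on $i$ collapses these into $A \ci_p \{B_1, \ldots, B_k\} | Z_0$, which is exactly L1: at step $i+1$, combine the inductive hypothesis $A \ci_p \{B_1, \ldots, B_i\} | Z_0 \cup \{B_{i+1}, \ldots, B_k\}$ with P1 for $B_{i+1}$ via the intersection property (with $X = A$, $Y = \{B_1, \ldots, B_i\}$, $W = B_{i+1}$, $Z = Z_0 \cup \{B_{i+2}, \ldots, B_k\}$) to obtain $A \ci_p \{B_1, \ldots, B_{i+1}\} | Z_0 \cup \{B_{i+2}, \ldots, B_k\}$. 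The same pattern, now with base conditioning set $S \cup Pa_G(A \cup S)$ and variables enumerated over $Nd_G(C) \setminus Pa_G(A \cup S)$, derives L2$^*$ from the pairwise statements supplied by P2$^*$. Having established L1 and L2$^*$ for every $C \in Cc(G)$, Theorem \ref{the:localamp} delivers the global Markov property.

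\textbf{Main obstacle.} The iterative intersection step in the reverse direction is essentially mechanical once the enumeration is fixed. The real work lies in the forward direction for P2$^*$: because $S$ can be an arbitrary subset of $C \setminus A$, conditioning on $S$ may activate colliders lying in $C$, and the hypothesis $B \notin Pa_G(A \cup S)$ must be used precisely to rule out any activated path from terminating at $B$. If a direct Criterion-1 analysis proves cumbersome, I would translate the problem into the augmented graph $G[A \cup B \cup Z]^a$ via Theorem \ref{the:1} and check the absence of an augmented path there instead.
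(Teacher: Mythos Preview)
Your reverse direction is exactly the paper's argument: enumerate $Nd_G(C)\setminus Pa_G(A\cup S)$ and apply intersection repeatedly to P2$^*$ to obtain L2$^*$, enumerate $C\setminus(A\cup Ne_G(A))$ and apply intersection repeatedly to P1 to obtain L1, then invoke Theorem~\ref{the:localamp}. The inductive bookkeeping you wrote out is correct.

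The forward direction is where you diverge, and you are making it much harder than necessary. The paper does not verify the separation behind P2$^*$ by a path analysis at all. Instead it observes that, once Theorem~\ref{the:localamp} is in hand, the global Markov property gives L1 and L2$^*$, and then P1 and P2$^*$ drop out of L1 and L2$^*$ by a single application of weak union: from $A\ci_p Nd_G(C)\setminus Pa_G(A\cup S)\mid S\cup Pa_G(A\cup S)$ one moves every node of $Nd_G(C)\setminus(Pa_G(A\cup S)\cup B)$ into the conditioning set to get $A\ci_p B\mid S\cup Nd_G(C)\setminus B$. That is the entire argument. Your proposed Criterion~1 analysis is valid in principle (separation relations themselves form a graphoid, so the same weak-union step works at the graph level), but the sketch you give is incomplete: the claim that ``any such path must leave $C$ through a parent edge into $Nd_G(C)$'' ignores paths that first descend into $de_G(C)\setminus C$ before climbing back up, and handling those cases plus the collider activation by $S$ is exactly the complication you flag as the main obstacle. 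There is no need to face it; route the forward direction through Theorem~\ref{the:localamp} and weak union instead.
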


\section{\bf Causal Interpretation of ADMGs}\label{sec:interpretation}

Let us assume that $V$ is normally distributed. In this section, we show that an ADMG $G$ can be interpreted as a system of structural equations with correlated errors. Specifically, the system includes an equation for each $A \in V$, which is of the form
\[
A = \beta_A Pa_G(A) + \epsilon_A
\]
where $\epsilon_A$ denotes the error term. The error terms are represented implicitly in $G$. They can be represented explicitly by magnifying $G$ into the ADMG $G'$ as follows:

\begin{table}[H]
\centering
\scalebox{1.0}{
\begin{tabular}{|ll|}
\hline
1 & Set $G'=G$\\
2 & For each node $A$ in $G$\\
3 & \hspace{0.3cm} Add the node $\epsilon_A$ and the edge $\epsilon_A \ra A$ to $G'$\\
4 & For each edge $A - B$ in $G$\\
5 & \hspace{0.3cm} Replace $A - B$ with the edge $\epsilon_A - \epsilon_B$ in $G'$\\
\hline
\end{tabular}}
\end{table}

The magnification above basically consists in adding the error nodes $\epsilon_A$ to $G$ and connect them appropriately. Figure \ref{fig:example2} shows an example. Note that every node $A \in V$ is determined by $Pa_{G'}(A)$ and that $\epsilon_A$ is determined by $A \cup Pa_{G'}(A) \setminus \epsilon_A$. Let $\epsilon$ denote all the error nodes in $G'$. Formally, we say that $A \in V \cup \epsilon$ is determined by $Z \subseteq V \cup \epsilon$ when $A \in Z$ or $A$ is a function of $Z$. We use $Dt(Z)$ to denote all the nodes that are determined by $Z$. From the point of view of the separations, that a node outside the conditioning set of a separation is determined by the conditioning set has the same effect as if the node were actually in the conditioning set. Bearing this in mind, it is not difficult to see that, as desired, $G$ and $G'$ represent the same separations over $V$. The following theorem formalizes this result.

\begin{figure}
\centering
\begin{tabular}{|c|c|}
\hline
$G$&$G'$\\
\hline
\begin{tikzpicture}[inner sep=1mm]
\node at (0,0) (A) {$A$};
\node at (1,0) (B) {$B$};
\node at (0,-1) (C) {$C$};
\node at (1,-1) (D) {$D$};
\node at (0,-2.5) (E) {$E$};
\node at (1,-2.5) (F) {$F$};
\path[->] (A) edge (B);
\path[->] (A) edge (C);
\path[->] (A) edge (D);
\path[->] (B) edge (D);
\path[-] (C) edge (D);
\path[-] (C) edge (E);
\path[-] (D) edge (F);
\path[-] (E) edge [bend left] (F);
\path[->] (E) edge (F);
\end{tikzpicture}
&
\begin{tikzpicture}[inner sep=1mm]
\node at (0,0) (A) {$A$};
\node at (1,0) (B) {$B$};
\node at (0,-1) (C) {$C$};
\node at (1,-1) (D) {$D$};
\node at (0,-2.5) (E) {$E$};
\node at (1,-2.5) (F) {$F$};
\node at (-1,0) (EA) {$\epsilon_A$};
\node at (2,0) (EB) {$\epsilon_B$};
\node at (-1,-1) (EC) {$\epsilon_C$};
\node at (2,-1) (ED) {$\epsilon_D$};
\node at (-1,-2.5) (EE) {$\epsilon_E$};
\node at (2,-2.5) (EF) {$\epsilon_F$};
\path[->] (EA) edge (A);
\path[->] (EB) edge (B);
\path[->] (EC) edge (C);
\path[->] (ED) edge (D);
\path[->] (EE) edge (E);
\path[->] (EF) edge (F);
\path[->] (A) edge (B);
\path[->] (A) edge (C);
\path[->] (A) edge (D);
\path[->] (B) edge (D);
\path[-] (EC) edge [bend right] (ED);
\path[-] (EC) edge (EE);
\path[-] (ED) edge (EF);
\path[-] (EE) edge [bend left] (EF);
\path[->] (E) edge (F);
\end{tikzpicture}\\
\hline
\end{tabular}\caption{Example of the magnification of an ADMG.}\label{fig:example2}
\end{figure}

\begin{theorem}\label{the:GG'}
Let $X$, $Y$ and $Z$ denote three disjoint subsets of $V$. Then, $X \ci_G Y | Z$ if and only if $X \ci_{G'} Y | Z$.
\end{theorem}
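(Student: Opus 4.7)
My plan is a double path-surgery argument between $G$ and $G'$ using Criterion~1. Following the paper's discussion immediately preceding the theorem, I interpret $X \ci_{G'} Y | Z$ with $Z \subseteq V$ as separation in $G'$ where the effective conditioning set is the determinism closure $Dt(Z) = Z \cup \{\epsilon_A : A \in Z \text{ and } Pa_G(A) \subseteq Z\}$. Because the only directed edges of $G'$ not already in $G$ are the $\epsilon_A \ra A$, and the only undirected edges of $G'$ live among $\epsilon$-nodes, we have $Dt(Z) \cap V = Z$ and $An_{G'}(Dt(Z)) \cap V = An_G(Z)$, and these identities will be used repeatedly. Moreover, any $\epsilon$-node has $Pa_{G'}(\epsilon_A) = \emptyset$, so the ``unless'' clause of Criterion~1 can never help an $\epsilon$-non-collider that lies in $Dt(Z)$.

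For the direction $G \Rightarrow G'$, let $\pi$ be a connecting path in $G$ from some $X_0 \in X$ to some $Y_0 \in Y$ given $Z$, and split it into maximal undirected subpaths interleaved with directed edges. For each maximal undirected segment $W_0 - W_1 - \cdots - W_k$ of $\pi$ (with $k \geq 1$), substitute the $G'$-detour $W_0 \la \epsilon_{W_0} - \epsilon_{W_1} - \cdots - \epsilon_{W_k} \ra W_k$, bypassing the interior $V$-nodes $W_1, \ldots, W_{k-1}$ altogether; the resulting walk $\pi'$ is a path since its vertices remain distinct. Its connecting-ness against $Dt(Z)$ is checked node by node: every $V$-collider on $\pi$ remains a collider on $\pi'$ and sits in $An_G(Z) \subseteq An_{G'}(Dt(Z))$; every $V$-non-collider on $\pi$ remains a non-collider on $\pi'$ and lies outside $Dt(Z) \cap V = Z$; and each inserted $\epsilon$-node is a non-collider. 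For an interior $\epsilon_{W_j}$ whose two incident edges are both undirected, the case $W_j \in Z$ arises only when the ``unless'' clause on $\pi$ forces $Pa_G(W_j) \setminus Z \neq \emptyset$, which is precisely the condition ensuring $\epsilon_{W_j} \notin Dt(Z)$.

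For the direction $G' \Rightarrow G$, let $\pi'$ be a connecting path in $G'$ from $X_0$ to $Y_0$ given $Dt(Z)$; because $\epsilon$-nodes can only appear interior to $\pi'$ and each has at most one $V$-neighbour, the maximal $\epsilon$-segments of $\pi'$ necessarily have the shape $B \la \epsilon_B - \epsilon_{C_1} - \cdots - \epsilon_{C_k} \ra C_k$ with $B, C_1, \ldots, C_k$ distinct $V$-nodes. Substitute each such segment by the undirected $G$-subpath $B - C_1 - \cdots - C_k$, producing a path $\pi$. For any introduced interior $C_j$ that happens to lie in $Z$, the fact that $\epsilon_{C_j}$ is a non-collider of $\pi'$ with both incident edges undirected and not in $Dt(Z)$ (no ``unless'' escape is available for $\epsilon$-nodes) combined with the characterization of $Dt(Z)$ forces $Pa_G(C_j) \not\subseteq Z$, so the ``unless'' clause holds at $C_j$ on $\pi$. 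The $V$-nodes shared between $\pi'$ and $\pi$ retain their collider and non-collider roles, and the identities $Dt(Z) \cap V = Z$ and $An_{G'}(Dt(Z)) \cap V = An_G(Z)$ translate each conditioning check from $\pi'$ to $\pi$.

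The main obstacle is the endpoint $\epsilon$-nodes $\epsilon_{W_0}, \epsilon_{W_k}$ of each detour in the $G \Rightarrow G'$ direction: being non-colliders in $\pi'$ with exactly one directed and one undirected incident edge, they admit no ``unless'' exception, so $\epsilon_{W_0}, \epsilon_{W_k} \notin Dt(Z)$ must be verified directly. A short case analysis on the edge adjacent to $W_0$ on $\pi$ (directed into $W_0$, directed out of $W_0$, or absent because $W_0 = X_0$) shows that the vertex preceding $W_0$ on $\pi$, if any, cannot lie in $Z$, for otherwise it would be an in-$Z$ non-collider with a directed incident edge, violating the ``unless'' clause on $\pi$. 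Hence this preceding vertex lies in $Pa_G(W_0) \setminus Z$, giving $Pa_G(W_0) \not\subseteq Z$ and therefore $\epsilon_{W_0} \notin Dt(Z)$; the argument at $W_k$ is symmetric.
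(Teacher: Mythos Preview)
Your approach mirrors the paper's proof: both directions use the same path transformation (replacing each maximal undirected segment $V_1 - \cdots - V_n$ by $V_1 \la \epsilon_{V_1} - \cdots - \epsilon_{V_n} \ra V_n$ and conversely), followed by a node-by-node verification of the connecting conditions. Your explicit formula $Dt(Z) = Z \cup \{\epsilon_A : A \in Z,\ Pa_G(A) \subseteq Z\}$ is a clean way to organize the checks that the paper carries out by enumerating subpath configurations.

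There is, however, a genuine gap in your final paragraph. In the case where the edge adjacent to $W_0$ on $\pi$ is directed \emph{out of} $W_0$ (i.e.\ $W_0 \ra W_{-1}$), both of your claims fail: the preceding vertex $W_{-1}$ can perfectly well be a collider on $\pi$ (take $W_{-2} \ra W_{-1} \la W_0$) and hence can lie in $Z$; and even when $W_{-1} \notin Z$, it is not in $Pa_G(W_0)$, since the arrow points the wrong way. So your uniform conclusion ``this preceding vertex lies in $Pa_G(W_0) \setminus Z$'' does not go through. The fix in this case is simpler and does not involve $W_{-1}$ at all: $W_0$ itself is a non-collider on $\pi$ (its incident edges are $W_0 \ra W_{-1}$ and $W_0 - W_1$, neither into $W_0$), and because one of those edges is directed, the ``unless'' clause is unavailable, forcing $W_0 \notin Z$ and hence $\epsilon_{W_0} \notin Dt(Z)$ immediately. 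This is exactly how the paper handles the corresponding configurations (its fifth and sixth subcases in the analysis of $\epsilon_B$). With this case repaired, your argument is complete and matches the paper's.
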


Finally, let $\epsilon \sim \mathcal{N}(0, \Lambda)$ such that $(\Lambda^{-1})_{\epsilon_A,\epsilon_B} = 0$ if $\epsilon_A - \epsilon_B$ is not in $G'$. Then, $G$ can be interpreted as a system of structural equations with correlated errors as follows. For any $A \in V$
\begin{equation}\label{eq:equation}
A = \beta_A Pa_G(A) + \epsilon_A
\end{equation}
and for any other $B \in V$
\begin{equation}\label{eq:equation2}
covariance(\epsilon_A, \epsilon_B) = \Lambda_{\epsilon_A,\epsilon_B}.
\end{equation}

The following two theorems confirm that the interpretation above works as intended. A similar result to the second theorem exists for the original ADMGs \cite[Theorem 1]{Koster1999}.

\begin{theorem}\label{the:gaussian}
Every probability distribution $p(V)$ specified by Equations (\ref{eq:equation}) and (\ref{eq:equation2}) is Gaussian.
\end{theorem}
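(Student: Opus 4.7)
The plan is to exploit the directed acyclicity of $G$ in order to express the vector $V$ as a deterministic linear transformation of the Gaussian error vector $\epsilon$; Gaussianity of $V$ will then follow at once, because any linear image of a multivariate normal vector is itself multivariate normal.

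First, because $G$ contains no directed cycles, I can choose a total order $A_1 \prec \cdots \prec A_n$ of the nodes of $V$ that is consistent with the directed edges, in the sense that $A_j \in Pa_G(A_i)$ implies $j < i$. I then proceed by induction on $i$ to establish that there exist real coefficients $\gamma_{ij}$ such that
\[
A_i = \sum_{j \leq i} \gamma_{ij} \, \epsilon_{A_j}.
\]
The base case is immediate: $A_1$ has no parents in $G$, so Equation (\ref{eq:equation}) reduces to $A_1 = \epsilon_{A_1}$. For the inductive step, every $B \in Pa_G(A_i)$ precedes $A_i$ in the order and is, by the induction hypothesis, a linear combination of $\epsilon_{A_1}, \ldots, \epsilon_{A_{i-1}}$; substituting these expressions into (\ref{eq:equation}) and collecting like terms produces a linear expression of the desired form, with $\gamma_{ii} = 1$.

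Stacking the $n$ identities yields a matrix equation $V = M \epsilon$, where $M$ is a lower-triangular $n \times n$ matrix with unit diagonal, determined by the coefficients $\beta_A$ and by the parent structure of $G$. The specification (\ref{eq:equation2}) together with the Gaussian assumption on $\epsilon$ amounts to $\epsilon \sim \mathcal{N}(0, \Lambda)$, so the distribution of $V$ is the pushforward of a multivariate normal law under a linear map, and is therefore $\mathcal{N}(0, M \Lambda M^{\top})$. This is exactly the claim.

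The main obstacle is less conceptual than organisational: one needs to confirm that the recursion terminates and produces a well-defined matrix $M$, which is guaranteed by the existence of the topological order. It is worth noting that the sparsity constraint $(\Lambda^{-1})_{\epsilon_A,\epsilon_B} = 0$ whenever $\epsilon_A$ and $\epsilon_B$ are non-adjacent in $G'$ is not used anywhere in this argument; it plays no role in Gaussianity itself and becomes relevant only when one afterwards asks which conditional independence statements on $V$ are entailed by $p(V)$. Here all we require is that $\Lambda$ be a valid covariance matrix, which is built into the specification.
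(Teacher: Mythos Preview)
Your proof is correct and follows essentially the same approach as the paper: both arguments use the directed acyclicity of $G$ to unwind the structural equations into $V = M\epsilon$ for some matrix $M$ (the paper calls it $\delta$ and obtains it by repeated back-substitution rather than induction on a topological order), and then conclude $V \sim \mathcal{N}(0, M\Lambda M^{\top})$. Your version is slightly more explicit about the triangular structure of $M$ and the irrelevance of the sparsity constraint on $\Lambda^{-1}$, but the core argument is identical.
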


\begin{theorem}\label{the:markovian}
Every probability distribution $p(V)$ specified by Equations (\ref{eq:equation}) and (\ref{eq:equation2}) satisfies the global Markov property with respect to $G$.
\end{theorem}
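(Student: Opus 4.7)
The strategy is to lift the problem from $G$ to the magnified graph $G'$, where the noise structure is explicit, and to exploit the fact that $G'$ is itself an AMP chain graph.

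First, I would verify that $G'$ is an AMP CG: its undirected edges sit entirely among the error nodes $\epsilon$, no directed edge points into an error node, and the acyclicity of $G$ on $V$ is inherited, so $G'$ has neither directed nor semidirected cycles. The connectivity components of $G'$ are therefore the singletons $\{A\}$ for $A\in V$ together with the connectivity components of the undirected error subgraph.

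Next I would factorise the joint law of $(V,\epsilon)$ as $p(V,\epsilon)=p(\epsilon)\,p(V\mid\epsilon)$. By construction $p(\epsilon)=\mathcal{N}(0,\Lambda)$ with $\Lambda^{-1}$ vanishing exactly at the absent undirected edges among error nodes, so the classical Gaussian undirected Markov theorem gives that $p(\epsilon)$ satisfies the global Markov property with respect to $(G')^u$ restricted to $\epsilon$. The conditional $p(V\mid\epsilon)$ is deterministic, with $A=\beta_A Pa_G(A)+\epsilon_A$ for each $A\in V$.

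Using these ingredients I would show that $p(V,\epsilon)$ satisfies the global Markov property with respect to $G'$ by verifying the block-recursive criterion C1$^*$, C2$^*$ of Theorem \ref{the:blockamp} component by component. For each undirected error component both properties reduce to the Gaussian UG statement just recalled, using that distinct error components are separated by $\emptyset$ and hence marginally independent. For each singleton $V$-component $\{A\}$, C2$^*$ is vacuous and C1$^*$ is immediate because $A$ is a deterministic function of $Pa_{G'}(A)=\epsilon_A\cup Pa_G(A)$. Once Markovianity of $p(V,\epsilon)$ on $G'$ is in hand, the theorem follows: given disjoint $X,Y,Z\subseteq V$ with $X\ci_G Y\mid Z$, Theorem \ref{the:GG'} yields $X\ci_{G'}Y\mid Z$, the AMP Markov property gives $X\ci_{p(V,\epsilon)}Y\mid Z$, and marginalising over $\epsilon$ delivers $X\ci_{p(V)}Y\mid Z$.

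The principal technical obstacle is that $p(V\mid\epsilon)$ is degenerate, whereas AMP chain graph Markov results are classically phrased for non-degenerate Gaussian regressions. I would resolve this either by interpreting separations throughout via the ``determined-by'' relation $Dt(\cdot)$ already introduced in Section \ref{sec:interpretation}, which was designed precisely for functional dependencies; or by regularising each structural equation with an independent perturbation $\sigma\eta_A\sim\mathcal{N}(0,\sigma^2)$, invoking the standard AMP CG theorem on the resulting non-degenerate system, and letting $\sigma\downarrow 0$, using that conditional independence passes to weak limits of Gaussians with convergent covariance matrices.
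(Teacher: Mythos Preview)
Your proposal is correct and follows the same architecture as the paper's proof: recognise $G'$ as an AMP chain graph, establish the global Markov property for $p(V\cup\epsilon)$ with respect to $G'$, and descend to $G$ via Theorem~\ref{the:GG'}. The one notable difference is which AMP characterisation you verify in the middle step. You check the block-recursive conditions C1$^*$, C2$^*$ of Theorem~\ref{the:blockamp} component by component, whereas the paper checks the local conditions L1, L2$^*$ and invokes Theorem~\ref{the:localamp}; indeed the paper's Equations~(\ref{eq:equation3}), (\ref{eq:equation5}) and~(\ref{eq:equation6}) are exactly L2$^*$ for the singleton $V$-components, L2$^*$ for the parent-free error components, and L1 for the error components, respectively. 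Your route through C1$^*$, C2$^*$ has the mild advantage that Theorem~\ref{the:blockamp} (together with the Andersson--Madigan--Perlman block-recursive result it refines) does not carry an intersection-property hypothesis, so the degeneracy of $p(V\mid\epsilon)$ that you flag is less of an obstacle than it first appears and your regularisation device, while sound for Gaussians, is not strictly needed; the paper, going through Theorem~\ref{the:localamp}, nominally requires intersection but leaves implicit that for these particular components the relevant step from L1, L2$^*$ to C1$^*$, C2$^*$ never actually uses it.
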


The equations above specify each node as a linear function of its parents with additive normal noise. The equations can be generalized to nonlinear or nonparametric functions as long as the noise remains additive normal. That is, $A = f(Pa_G(A)) + \epsilon_A$ for all $A \in V$, with $\epsilon \sim \mathcal{N}(0, \Lambda)$ such that $(\Lambda^{-1})_{\epsilon_A,\epsilon_B} = 0$ if $\epsilon_A - \epsilon_B$ is not in $G'$. That the noise is additive normal ensures that $\epsilon_A$ is determined by $A \cup Pa_{G'}(A) \setminus \epsilon_A$, which is needed for Theorem \ref{the:GG'} to remain valid which, in turn, is needed for Theorem \ref{the:markovian} to remain valid.

A less formal but more intuitive alternative interpretation of ADMGs is as follows. We can interpret the parents of each node in an ADMG as its observed causes. Its unobserved causes are grouped into an error node that is represented implicitly in the ADMG. We can interpret the undirected edges in the ADMG as the correlation relationships between the different error nodes. The causal structure is constrained to be a DAG, but the correlation structure can be any UG. This causal interpretation of our ADMGs parallels that of the original ADMGs \citep{Pearl2009}. There are however two main differences. First, the noise in the original ADMGs is not necessarily additive normal. Second, the correlation structure of the error nodes in the original ADMGs is represented by a covariance graph, i.e. a graph with only bidirected edges \citep{PearlandWermuth1993}. Therefore, whereas a missing edge between two error nodes in the original ADMGs represents marginal independence, in our ADMGs it represents conditional independence given the rest of the error nodes. This means that the original and our ADMGs represent complementary causal models. Consequently, there are scenarios where the identification of the causal effect of an intervention is not possible with the original ADMGs but is possible with ours, and vice versa. We elaborate on this in the next section.

\subsection{\bf {\em do}-calculus for ADMGs}\label{sec:do}

We start by adapting Pearl's {\em do}-calculus, which operates on the original ADMGs, to our ADMGs. The original {\em do}-calculus consists of the following three rules, whose repeated application permits in some cases to identify (i.e. compute) the causal effect of an intervention from observed quantities:
\begin{itemize}
\item Rule 1 (insertion/deletion of observations):

$p(Y| do(X), Z \cup W) = p(Y| do(X), W)$ if $Y \ci_{G''} Z | X \cup W || X$.

\item Rule 2 (action/observation exchange): 

$p(Y| do(X), do(Z), W) = p(Y| do(X), Z \cup W)$ if $Y \ci_{G''} F_Z | X \cup W \cup Z || X$.

\item Rule 3 (insertion/deletion of actions): 

$p(Y| do(X), do(Z), W) = p(Y| do(X), W)$ if $Y \ci_{G''} F_Z | X \cup W || X$.
\end{itemize}
where $X$, $Y$, $Z$ and $W$ are disjoint subsets of $V$, $G''$ is the original ADMG $G$ augmented with an intervention random variable $F_A$ and an edge $F_A \ra A$ for every $A \in V$, and ``$||X$'' denotes an intervention on X in $G''$, i.e. any edge with an arrowhead into any node in $X$ is removed. See \citet[p. 686]{Pearl1995} for further details and the proof that the rules are sound. Fortunately, the rules also apply to our ADMGs by simply redefining ``$||X$'' appropriately. The proof that the rules are still sound is essentially the same as before. Specifically, ``$||X$'' should now be implemented as follows:
\begin{itemize}
\item Delete all the directed edges pointing to nodes in $X$,

\item for every path $A - V_1 - \ldots - V_n - B$ with $A, B \notin X$ and $V_1, \ldots, V_n \in X$, add the edge $A - B$, and

\item delete all the undirected edges with an endnode in $X$.
\end{itemize}
The first step is the same as an intervention in an original ADMG. The second and third steps of the intervention are best understood in terms of the magnified ADMG $G'$: They correspond to marginalizing the error nodes associated to the nodes in $X$ out of $G'_\epsilon$, the UG that represents the correlation structure of the error nodes. In other words, they replace $G'_\epsilon$ with $(G')^{\epsilon \setminus \epsilon_X}$, the marginal graph of $G'_\epsilon$ over $\epsilon \setminus \epsilon_X$. This makes sense since $\epsilon_X$ is no longer associated to $X$ due to the intervention and, thus, we may want to marginalize it out because it is unobserved. This is exactly what the second and third steps of the intervention imply. To see it, note that the ADMG after the intervention and the magnified ADMG after the intervention represent the same separations over $V$, by Theorem \ref{the:GG'}.

Now, we show that the original and our ADMGs allow for complementary causal reasoning. Specifically, we show an example where our ADMGs allow for the identification of the causal effect of an intervention whereas the original ADMGs do not, and vice versa. Consider the DAG in Figure \ref{fig:example3}, which represents the causal relationships among all the random variables in the domain at hand.\footnote{For instance, the DAG may correspond to the following fictitious domain: $A$ = Smoking, $B$ = Lung cancer, $C$ = Drinking, $U_A$ = Parents' smoking, $U_B$ = Parents' lung cancer, $U_C$ = Parents' drinking, $U$ = Parents' genotype that causes smoking and drinking, $U_S$ = Parents' hospitalization.} However, only $A$, $B$ and $C$ are observed. Moreover, $U_S$ represents selection bias. Although other definitions may exist, we say that selection bias is present if two unobserved causes have a common effect that is omitted from the study but influences the selection of the samples in the study \citep[p. 163]{Pearl2009}. Therefore, the corresponding unobserved causes are correlated in every sample selected. Note that this definition excludes the possibility of an intervention affecting the selection because, in a causal model, unobserved causes do not have observed causes. Note also that our goal is not the identification of the causal effect of an intervention in the whole population but in the subpopulation that satisfies the selection bias criterion.\footnote{For instance, in the fictitious domain in the previous footnote, we are interested in the causal effect that smoking may have on the development of lung cancer for the patients with hospitalized parents.} For causal effect identification in the whole population, see \citet{BareinboimandTian2015}.

\begin{figure}
\centering
\begin{tabular}{|c|c|c|}
\hline
DAG&Our ADMG&Original ADMG\\
\hline
\begin{tikzpicture}[inner sep=1mm]
\node at (0,0) (A) {$A$};
\node at (2,0) (B) {$B$};
\node at (1,1) (C) {$C$};
\node at (0,2) (UA) {$U_A$};
\node at (2,2) (UB) {$U_B$};
\node at (1,3) (UC) {$U_C$};
\node at (0,3) (U) {$U$};
\node at (2,3) (US) {$U_S$};
\path[->] (A) edge (B);
\path[->] (UA) edge (A);
\path[->] (UB) edge (B);
\path[->] (UC) edge (C);
\path[->] (U) edge (UA);
\path[->] (U) edge (UC);
\path[->] (UB) edge (US);
\path[->] (UC) edge (US);
\end{tikzpicture}
&
\begin{tikzpicture}[inner sep=1mm]
\node at (0,0) (A) {$A$};
\node at (2,0) (B) {$B$};
\node at (1,1) (C) {$C$};
\path[->] (A) edge (B);
\path[-] (A) edge (C);
\path[-] (B) edge (C);
\end{tikzpicture}
&
\begin{tikzpicture}[inner sep=1mm]
\node at (0,0) (A) {$A$};
\node at (2,0) (B) {$B$};
\node at (1,1) (C) {$C$};
\path[->] (A) edge (B);
\path[<->] (A) edge [bend left] (B);
\path[<->] (A) edge (C);
\path[<->] (B) edge (C);
\end{tikzpicture}\\
\hline
\end{tabular}\caption{Example of a case where $p(B | do(A))$ is identifiable with our ADMG but not with the original one.}\label{fig:example3}
\end{figure}

The ADMGs in Figure \ref{fig:example3} represent the causal model represented by the DAG when only the observed random variables are modeled. According to our interpretation of ADMGs above, our ADMG is derived from the DAG by keeping the directed edges between observed random variables, and adding an undirected edge between two observed random variables if and only if their unobserved causes are not separated in the DAG given the unobserved causes of the rest of the observed random variables. In other words, $U_A \ci U_B | U_C$ holds in the DAG but $U_A \ci U_C | U_B$ and $U_B \ci U_C | U_A$ do not and, thus, the edges $A - C$ and $B - C$ are added to the ADMG but $A - B$ is not. Deriving the original ADMG is less straightforward. The bidirected edges in an original ADMG represent potential marginal dependence due to a common unobserved cause, also known as confounding. Thus, the original ADMGs are not meant to model selection bias. The best we can do is then to use bidirected edges to represent potential marginal dependences regardless of their origin. This implies that we can derive the original ADMG from the DAG by keeping the directed edges between observed random variables, and adding a bidirected edge between two observed random variables if and only if their unobserved causes are not separated in the DAG given the empty set. Clearly, $p(B | do(A))$ is not identifiable with the original ADMG but is identifiable with our ADMG. Specifically,
\[
p(B | do(A)) = \sum_C p(B | do(A), C) p(C | do(A)) = \sum_C p(B | do(A), C) p(C) = \sum_C p(B | A, C) p(C)
\]
where the first equality is due to marginalization, the second due to Rule 3, and the third due to Rule 2.

The original ADMGs assume that confounding is always the source of correlation between unobserved causes. In the example above, we consider selection bias as an additional source. However, this is not the only possibility. For instance, $U_B$ and $U_C$ may be tied by a physical law of the form $f(U_B, U_C) = constant$ devoid of causal meaning, much like Boyle's law relates the pressure and volume of a gas as $pressure \cdot volume = constant$ if the temperature and amount of gas remain unchanged within a closed system. In such a case, the discussion above still applies and our ADMG allows for causal effect identification but the original does not. For an example where the original ADMGs allow for causal effect identification whereas ours do not, simply replace the subgraph $U_C \ra U_S \la U_B$ in Figure \ref{fig:example3} with $U_C \la W \ra U_B$ where $W$ is an unobserved random variable. Then, our ADMG will contain the same edges as before plus the edge $A - B$, making the causal effect non-identifiable. The original ADMG will contain the same edges as before with the exception of the edge $A \aa B$, making the causal effect identifiable.

In summary, the bidirected edges of the original ADMGs have a clear semantics: They represent potential marginal dependence due to a common unobserved cause. This means that we have to know the causal relationships between the unobserved random variables to derive the ADMG. Or at least, we have to know that there is no selection bias or tying laws so that marginal dependence can be attributed to a common unobserved cause due to Reichenbach's principle \citep[p. 30]{Pearl2009}. This knowledge may not be available in some cases. Moreover, the original ADMGs are not meant to represent selection bias or tying laws. To solve these two problems, we may be willing to use the bidirected edges to represent potential marginal dependences regardless of their origin. Our ADMGs are somehow dual to the original ADMGs, since the undirected edges represent potential saturated conditional dependence between unobserved causes. This implies that in some cases, such as in the example above, our ADMGs may allow for causal effect identification whereas the original may not.

\section{\bf Learning ADMGs Via ASP}\label{sec:asp}

In this section, we introduce an exact algorithm for learning ADMGs via answer set programming (ASP), which is a declarative constraint satisfaction paradigm that is well-suited for solving computationally hard combinatorial problems \citep{gelfond_1988,DBLP:journals/amai/Niemela99,DBLP:journals/ai/SimonsNS02}. ASP represents constraints in terms of first-order logical rules. Therefore, when using ASP, the first task is to model the problem at hand in terms of rules so that the set of solutions implicitly represented by the rules corresponds to the solutions of the original problem. One or multiple solutions of the original problem can then be obtained by invoking an off-the-shelf ASP solver on the constraint declaration. The algorithms underlying the ASP solver \texttt{clingo} \citep{DBLP:journals/aicom/GebserKKOSS11}, which we use in this work, are based on state-of-the-art Boolean satisfiability solving techniques \citep{DBLP:series/faia/2009-185}.

Figure \ref{fig:asp} shows the ASP encoding of our learning algorithm. The predicate \texttt{node(X)} in rule 1 represents that $X$ is a node. The predicates \texttt{line(X,Y,I)} and \texttt{arrow(X,Y,I)} represent that there is an undirected and directed edge from $X$ to $Y$ after having intervened on the node $I$. The observational regime corresponds to $I=0$. The rules 2-3 encode a non-deterministic guess of the edges for the observational regime, which means that the ASP solver with implicitly consider all possible graphs during the search, hence the exactness of the search. The edges under the observational regime are used in the rules 4-5 to define the edges in the graph after having intervened on $I$, following the description in Section \ref{sec:do}. Therefore, the algorithm assumes continuous random variables and additive normal noise when the input contains interventions. It makes no assumption though when the input consists of just observations. The rules 6-7 enforce the fact that undirected edges are symmetric and that there is at most one directed edge between two nodes. The predicate \texttt{ancestor(X,Y)} represents that $X$ is an ancestor of $Y$. The rules 8-10 enforce that the graph has no directed cycles. The predicates in the rules 11-12 represent whether a node $X$ is or is not in a set of nodes $C$. The rules 13-24 encode the separation criterion 2 in Section \ref{sec:global}. The predicate \texttt{con(X,Y,C,I)} in rules 25-28 represents that there is a connecting route between $X$ and $Y$ given $C$ after having intervened on $I$. The rule 29 enforces that each dependence in the input must correspond to a connecting route. The rule 30 represents that each independence in the input that is not represented implies a penalty of $W$ units. In our case, $W=1$. The rules 31-33 represent a penalty of 1 unit per edge. Other penalty rules can be added similarly.

\begin{figure}
\centering
\tiny
\fbox{
\begin{minipage}{0.7\textwidth}
\input{ASPalternativeADMGsinterventional.tex}
\end{minipage}
}
\caption{ASP encoding of the learning algorithm.}\label{fig:asp}
\end{figure}

Figure \ref{fig:asp} shows the ASP encoding of all the (in)dependences in the probability distribution at hand, e.g. as determined by some available data. Specifically, the predicate \texttt{nodes(3)} represents that there are three nodes in the domain at hand, and the predicate \texttt{set(0..7)} represents that there are eight sets of nodes, indexed from 0 (empty set) to 7 (full set). The predicate \texttt{indep(X,Y,C,I,W)} (respectively \texttt{dep(X,Y,C,I,W)}) represents that the nodes $X$ and $Y$ are conditionally independent (respectively dependent) given the set index $C$ after having intervened on the node $I$. Observations correspond to $I=0$. The penalty for failing to represent an (in)dependence is $W$. Note that it suffices to specify all the (in)dependences between pair of nodes, because these identify uniquely the rest of the independences in the probability distribution \citep[Lemma 2.2]{Studeny2005}. Note also that we do not assume that the probability distribution at hand is faithful to some ADMG or satisfies the composition property, as it is the case in most heuristic learning algorithms.

By calling the ASP solver with the encodings of the learning algorithm and the (in)dependences in the domain, the solver will essentially perform an exhaustive search over the space of graphs, and will output the graphs with the smallest penalty. Specifically, when only the observations are used (i.e. the last six lines of Figure \ref{fig:asp2} are removed), the learning algorithm finds 37 optimal models. Among them, we have UGs such as \texttt{line(1,2) line(2,3) line(3,1)}, DAGs such as \texttt{arrow(3,1) arrow(1,2) arrow(3,2)}, AMP CGs such as \texttt{line(1,2) arrow(3,1) arrow(3,2)}, and ADMGs such as \texttt{line(1,2) line(2,3) arrow(1,2)} or \texttt{line(1,2) line(1,3) arrow(2,3)}. When all the observations and interventions available are used, the learning algorithm finds 18 optimal models. These are the models out the 37 models found before that have no directed edge coming out of the node 3. This is the expected result given the last four lines in Figure \ref{fig:asp2}. Note that the output still includes the ADMGs mentioned before.

Finally, the ASP code easily accommodates prior knowledge in the form of a node ordering, or forbidden and compelled edges. For instance, we can constrain the graphs to be consistent with the ordering $1 \prec 2 \prec 3$ by simply adding the rules in Figure \ref{fig:asp3} (top). Moreover, the ASP code can easily be extended as shown in Figure \ref{fig:asp3} (bottom) to learn not only our ADMGs but also original ADMGs. Note that the second line forbids graphs with both undirected and bidirected edges. This results in 34 optimal models: The 18 previously found plus 16 original ADMG, e.g. \texttt{biarrow(1,2) biarrow(1,3) arrow(1,2)} or \texttt{biarrow(1,2) biarrow(1,3) arrow(2,3)}.

\begin{figure}
\centering
\tiny
\fbox{
\begin{minipage}{0.35\textwidth}
\input{completeinterventional.tex}
\end{minipage}
}
\caption{ASP encoding of the (in)dependences in the domain.}\label{fig:asp2}
\end{figure}

\begin{figure}
\centering
\tiny
\fbox{
\begin{minipage}{0.15\textwidth}
\input{complete2.tex}
\end{minipage}
}
\\
\fbox{
\begin{minipage}{0.6\textwidth}
\input{ASPoriginalADMGs.tex}
\end{minipage}
}
\caption{Additional ASP encoding for adding a node ordering (top) and learning original ADMGs as well as ours (bottom).}\label{fig:asp3}
\end{figure}

\section{\bf Discussion}\label{sec:discussion}

In this work, we have introduced ADMGs as an extension AMP CGs by (i) relaxing the semidirected acyclity constraint so that only directed cycles are forbidden, and (ii) allowing up to two edges between any pair of nodes. We have introduced and proved the equivalence of global, and ordered local and pairwise Markov properties for the new models. We have also shown that when the random variables are continuous, the new models can be interpreted as systems of structural equations with correlated errors. This has enabled us to adapt Pearl's {\em do}-calculus to them. We have shown that our models complement those used in Pearl's {\em do}-calculus, as there are cases where the identification of the causal effect of an intervention is not possible with the latter but is possible with the former, and vice versa. Finally, we have described an exact algorithm for learning the new models from observational and interventional data.

In the future, we plan to unify the original and our ADMGs by allowing directed, undirected and bidirected edges.

\section*{\bf Appendix: Proofs}\label{sec:appendix}

\begin{lemma}\label{lem:ancestors}
If there is a path $\rho$ in an ADMG $G$ between $A \in X$ and $B \in Y$ such that (i) no non-collider $C$ on $\rho$ is in $Z$ unless $A - C - B$ is a subpath of $\rho$ and $Pa_G(C) \setminus Z \neq \emptyset$, and (ii) every collider on $\rho$ is in $An_G(X \cup Y \cup Z)$, then there is a path in $G$ connecting a node in $X$ and a node in $Y$ given $Z$.
\end{lemma}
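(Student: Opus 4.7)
The plan is a minimality argument. Consider the family of all paths in $G$ between some node of $X$ and some node of $Y$ satisfying the hypothesis conditions (i) and (ii); this family is non-empty since it contains $\rho$. Pick $\rho^*$ in this family that minimizes the number of \emph{bad} colliders, where a collider $C$ on a path is called bad when $C \notin An_G(Z)$. It suffices to show this minimum is $0$, because then $\rho^*$ also meets the stronger Criterion 1 requirement on colliders (while condition (i) already gives the non-collider requirement verbatim).

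Suppose for contradiction that $\rho^*$ has a bad collider $C$. Condition (ii) together with $C \notin An_G(Z)$ forces $C \in An_G(X) \cup An_G(Y)$. After possibly reversing $\rho^*$ and swapping the roles of $X$ and $Y$, assume $C \in An_G(Y)$, and fix a directed path $\sigma : C = E_0 \ra E_1 \ra \cdots \ra E_\ell$ with $E_\ell \in Y$. A key observation is that no $E_i$ lies in $An_G(Z)$: otherwise composing with $\sigma$ would certify $C \in An_G(Z)$.

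Let $A^*, B^*$ denote the endpoints of $\rho^*$, and write $\rho^*|_{A^* \to V}$ for the subpath of $\rho^*$ from $A^*$ up to a node $V$ lying on it. Distinguish two cases according to how $\sigma$ interacts with $\rho^*|_{A^* \to C}$:
\begin{itemize}
\item[(a)] none of $E_1, \ldots, E_\ell$ lies on $\rho^*|_{A^* \to C}$; define $\tau$ to be the concatenation of $\rho^*|_{A^* \to C}$ with $\sigma$;
\item[(b)] at least one does; let $m$ be the largest index with $E_m$ on $\rho^*|_{A^* \to C}$, and define $\tau$ to be the concatenation of $\rho^*|_{A^* \to E_m}$ with $E_m \ra E_{m+1} \ra \cdots \ra E_\ell$.
\end{itemize}
By the case definitions, no node in the $\sigma$-suffix coincides with a node in the used $\rho^*$-prefix, so $\tau$ is a simple path from $A^* \in X$ to $E_\ell \in Y$.

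Conditions (i) and (ii) transfer to $\tau$: each intermediate $E_i$ on the suffix is a non-collider (internal to a directed chain), and the transition node ($C$ in (a), $E_m$ in (b)) becomes a non-collider because its outgoing edge $\ra E_{i+1}$ leaves a tail at it; none of these nodes lies in $Z$ since none lies in $An_G(Z)$, so (i) holds for them. The remaining nodes of $\tau$ sit in the strict interior of the $\rho^*$-prefix with unchanged neighbors, hence inherit (i) and (ii) from $\rho^*$. Finally, every collider of $\tau$ lies in that strict interior, which excludes $C$ (in (a) because $C$ is the prefix endpoint, in (b) because $C$ lies strictly past $E_m$ on $\rho^*$); so $\tau$ has strictly fewer bad colliders than $\rho^*$, contradicting minimality. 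The main subtlety is producing a \emph{simple} path that strips $C$ of its collider status without creating new bad colliders; the case split on where $\sigma$ first re-enters the $A^*$-side half of $\rho^*$ handles both issues uniformly.
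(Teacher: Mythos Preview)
Your argument is correct and follows essentially the same strategy as the paper: pick a bad collider $C$ (one in $An_G(X\cup Y)\setminus An_G(Z)$), splice in a directed chain from $C$ to $X$ or $Y$, and observe that the resulting path still satisfies (i) and (ii) while having strictly fewer bad colliders. You phrase this as a minimality argument whereas the paper iterates, but that is only cosmetic. The one point where you are actually more careful than the paper is simplicity: the paper just writes ``replace the subpath of $\rho$ between $A$ and $C$ with $D\leftarrow\cdots\leftarrow C$'' without addressing what happens if the directed chain meets the retained half of $\rho$, whereas your case split on the last re-entry index $m$ guarantees $\tau$ is a genuine path and that $C$ is genuinely eliminated.
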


\begin{proof}
Suppose that $\rho$ has a collider $C$ such that $C \in An_G(D) \setminus An_G(Z)$ with $D \in X$, or $C \in An_G(E) \setminus An_G(Z)$ with $E \in Y$. Assume without loss of generality that $C \in An_G(D) \setminus An_G(Z)$ with $D \in X$ because, otherwise, a symmetric argument applies. Then, replace the subpath of $\rho$ between $A$ and $C$ with $D \la \ldots \la C$. Note that the resulting path (i) has no non-collider in $Z$ unless $A - C - B$ is a subpath of $\rho$ and $Pa_G(C) \setminus Z \neq \emptyset$, and (ii) has every collider in $An_G(X \cup Y \cup Z)$. Note also that the resulting path has fewer colliders than $\rho$ that are not in $An_G(Z)$. Continuing with this process until no such collider $C$ exists produces the desired result.
\end{proof}

\begin{lemma}\label{lem:colliders}
Given an ADMG $G$, let $\rho$ denote a shortest path in $G[A \cup B \cup Z]^a$ connecting two nodes $A$ and $B$ given $Z$. Then, a path in $G$ between $A$ and $B$ can be obtained as follows. First, replace every augmented edge on $\rho$ with an associated collider path in $G[A \cup B \cup Z]$. Second, replace every non-augmented edge on $\rho$ with an associated edge in $G[A \cup B \cup Z]$. Third, replace any configuration $C - D \la F \ra D - E$ produced in the previous steps with $C - D - E$.
\end{lemma}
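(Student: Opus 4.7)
The plan is threefold: verify that the three-step construction produces a walk in $G$ from $A$ to $B$, show that the shortestness of $\rho$ confines every node repetition in this walk to the configuration step 3 targets, and then check that step 3 yields a genuine path. The first point is immediate: the concatenation of the pieces produced by steps 1 and 2 is a walk in $G[A \cup B \cup Z] \subseteq G$ from $A$ to $B$, since augmented edges of $G[A \cup B \cup Z]^a$ correspond by definition to collider paths in $G[A \cup B \cup Z]$, and non-augmented edges are themselves edges of $G[A \cup B \cup Z]$.

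Let $\rho$ run through nodes $A = v_0, v_1, \ldots, v_n = B$ and let $P_i$ denote the expansion of the edge $v_{i-1} - v_i$. Aiming at a contradiction with shortestness, suppose some node $w$ appears twice in the concatenation. If its two occurrences lie in non-consecutive expansions $P_i, P_j$ with $j > i+1$, the prefix of $P_i$ from $v_{i-1}$ to $w$ and the suffix of $P_j$ from $w$ to $v_j$ are themselves collider paths in $G[A \cup B \cup Z]$, so both $v_{i-1} - w$ and $w - v_j$ belong to $G[A \cup B \cup Z]^a$, and splicing $\rho$ through $w$ yields a strictly shorter walk; the case where $w$ already equals some $v_k$ on $\rho$ is handled by the analogous direct shortcut. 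For $w \neq v_i$ shared between consecutive $P_i$ and $P_{i+1}$, I would invoke the fact that under Criterion 1's collider definition each $P_l$ has at most two interior colliders and therefore takes one of the four shapes $v_{l-1} \ra \bullet \la v_l$, $v_{l-1} \ra \bullet - v_l$, $v_{l-1} - \bullet \la v_l$ or $v_{l-1} \ra \bullet - \bullet \la v_l$. Enumerating all shape and placement combinations for $P_i$ and $P_{i+1}$, in every case but one, pieces of $P_i$ and $P_{i+1}$ around $w$ assemble into a valid collider path between $v_{i-1}$ and $v_{i+1}$, again contradicting shortestness. The sole exception is the configuration in which $w$ is simultaneously the last interior collider of $P_i$ and the first interior collider of $P_{i+1}$, both times reached from $v_i$ by an arrow and from its other $P$-neighbour by an undirected edge; in the attempted shortcut $w$ then has two undirected edges adjacent and fails Criterion 1's collider test, so no shortcut is forced. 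This exceptional configuration is precisely $C - D \la F \ra D - E$ with $D = w$ and $F = v_i$.

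At each such junction, replacing the four-edge sub-walk $C - D \la F \ra D - E$ by the two-edge sub-walk $C - D - E$ uses only edges already in $G$ and eliminates the repeated occurrence of $D$. The replacements at different junctions are disjoint; by the non-consecutive analysis $D$ does not reappear elsewhere; and a purely local shortening cannot introduce new repetitions. Hence after exhaustive application of step 3 the walk has no repeated node and is a path in $G$ between $A$ and $B$.

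The main obstacle will be the combinatorial case analysis for the consecutive-overlap case: roughly a dozen combinations of collider-path shapes paired with positions of $w$ must be sifted, and for each one must either exhibit an explicit shortcut collider path between $v_{i-1}$ and $v_{i+1}$ and verify that it truly qualifies as a collider path under Criterion 1, or recognise the step 3 pattern. Ensuring along the way that the shortcut walks constructed are themselves repetition-free (or can be trimmed to paths) is the subtle bookkeeping point.
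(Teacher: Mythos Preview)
Your overall strategy matches the paper's: exploit shortestness of $\rho$ to rule out node repetitions in the expanded walk, and recognise the residual repetitions as precisely the step~3 pattern. Your treatment of the consecutive-overlap case ($j=i+1$) and of the case where $w$ coincides with some $v_k$ on $\rho$ parallels the paper's Case~2 and its final paragraph, and your identification of the unique surviving configuration is correct.

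There is, however, a genuine gap in your non-consecutive case ($j>i+1$). You splice $\rho$ through $w$, obtaining the walk $v_0,\ldots,v_{i-1},w,v_j,\ldots,v_n$ in $G[A\cup B\cup Z]^a$. To contradict the shortestness of $\rho$ you need this walk to contain a \emph{connecting} path given $Z$, hence in particular $w\notin Z$. But $w$ is an interior collider of the expanded collider paths and may well lie in $Z$: take for instance $P_i=v_{i-1}\ra w\la v_i$ with $w\in Z$, which is a perfectly legitimate collider path underlying an augmented edge. In that situation your spliced walk passes through a node of $Z$ and gives no contradiction. The paper sidesteps this by never routing the shortcut through $w$: its Case~1 enumerates the possible configurations of the two collider paths sharing $w$ and, in each, exhibits a direct augmented edge between two nodes already on $\rho$ (of the form $C-D'$ or $D-D'$). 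Those nodes are automatically outside $Z$ because $\rho$ is connecting, so the shortened path is again connecting. In short, the non-consecutive case cannot be dispatched by the simple prefix/suffix splice; it needs the same kind of configurational enumeration you already plan for the consecutive case (the paper's Case~1 has ten configurations, comparable in size to its Case~2).
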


\begin{proof}
We start by proving that the collider paths added in the first step of the lemma either do not have any node in common except possibly one of the endpoints, or the third step of the lemma removes the repeated nodes. Suppose for a contradiction that $C - D$ and $C' - D'$ are two augmented edges on $\rho$ such that their associated collider paths have in common a node which is not an endpoint of these paths. Consider the following two cases.

\begin{description}
\item[Case 1] Suppose that $D \neq C'$. Then, one of the following configurations must exist in $G[A \cup B \cup Z]$.
\begin{figure}[H]
\centering
\begin{tabular}{|c|c|c|}
\hline
\begin{tikzpicture}[inner sep=1mm]
\node at (0,0) (C) {$C$};
\node at (1,0) (D) {$D$};
\node at (2,-1) (E) {$E$};
\node at (3,0) (C') {$C'$};
\node at (4,0) (D') {$D'$};

\path[->] (C) edge (E);
\path[-o] (D) edge (E);
\path[-o] (C') edge (E);
\path[-o] (D') edge (E);
\end{tikzpicture}
&
\begin{tikzpicture}[inner sep=1mm]
\node at (0,0) (C) {$C$};
\node at (1,0) (D) {$D$};
\node at (2,-1) (E) {$E$};
\node at (3,0) (C') {$C'$};
\node at (4,0) (D') {$D'$};

\path[-] (C) edge (E);
\path[->] (D) edge (E);
\path[-o] (C') edge (E);
\path[-o] (D') edge (E);
\end{tikzpicture}
&
\begin{tikzpicture}[inner sep=1mm]
\node at (0,0) (C) {$C$};
\node at (1,0) (D) {$D$};
\node at (1.5,-1) (E) {$E$};
\node at (2.5,-1) (F) {$F$};
\node at (3,0) (C') {$C'$};
\node at (4,0) (D') {$D'$};

\path[->] (C) edge (E);
\path[->] (D) edge (F);
\path[-] (E) edge (F);
\path[-o] (C') edge (F);
\path[-o] (D') edge (F);
\end{tikzpicture}
\\
\hline
\begin{tikzpicture}[inner sep=1mm]
\node at (0,0) (C) {$C$};
\node at (1,0) (D) {$D$};
\node at (1.5,-1) (E) {$E$};
\node at (2.5,-1) (F) {$F$};
\node at (3,0) (C') {$C'$};
\node at (4,0) (D') {$D'$};

\path[->] (C) edge (E);
\path[->] (D) edge (F);
\path[-] (E) edge (F);
\path[-o] (C') edge (E);
\path[-o] (D') edge (E);
\end{tikzpicture}
&
\begin{tikzpicture}[inner sep=1mm]
\node at (0,0) (C) {$C$};
\node at (1,0) (D) {$D$};
\node at (1,-1) (E) {$E$};
\node at (2,-1) (F) {$F$};
\node at (3,-1) (H) {$H$};
\node at (3,0) (C') {$C'$};
\node at (4,0) (D') {$D'$};

\path[->] (C) edge (E);
\path[->] (D) edge (F);
\path[-] (E) edge (F);
\path[->] (C') edge (F);
\path[->] (D') edge (H);
\path[-] (F) edge (H);
\end{tikzpicture}
&
\begin{tikzpicture}[inner sep=1mm]
\node at (0,0) (C) {$C$};
\node at (1,0) (D) {$D$};
\node at (1,-1) (E) {$E$};
\node at (2,-1) (F) {$F$};
\node at (3,-1) (H) {$H$};
\node at (3,0) (C') {$C'$};
\node at (4,0) (D') {$D'$};

\path[->] (C) edge (E);
\path[->] (D) edge (F);
\path[-] (E) edge (F);
\path[->] (C') edge (E);
\path[->] (D') edge (H);
\path[-] (E) edge [bend right] (H);
\end{tikzpicture}
\\
\hline
\begin{tikzpicture}[inner sep=1mm]
\node at (0,0) (C) {$C$};
\node at (1,0) (D) {$D$};
\node at (1,-1) (E) {$E$};
\node at (2,-1) (F) {$F$};
\node at (3,-1) (H) {$H$};
\node at (3,0) (C') {$C'$};
\node at (4,0) (D') {$D'$};

\path[->] (C) edge (E);
\path[->] (D) edge (F);
\path[-] (E) edge (F);
\path[->] (C') edge (H);
\path[->] (D') edge (F);
\path[-] (F) edge (H);
\end{tikzpicture}
&
\begin{tikzpicture}[inner sep=1mm]
\node at (0,0) (C) {$C$};
\node at (1,0) (D) {$D$};
\node at (1,-1) (E) {$E$};
\node at (2,-1) (F) {$F$};
\node at (3,-1) (H) {$H$};
\node at (3,0) (C') {$C'$};
\node at (4,0) (D') {$D'$};

\path[->] (C) edge (E);
\path[->] (D) edge (F);
\path[-] (E) edge (F);
\path[->] (C') edge (H);
\path[->] (D') edge (E);
\path[-] (E) edge [bend right] (H);
\end{tikzpicture}
&
\begin{tikzpicture}[inner sep=1mm]
\node at (0,0) (C) {$C$};
\node at (1,0) (D) {$D$};
\node at (1,-1) (E) {$E$};
\node at (2,-1) (F) {$F$};
\node at (3,0) (C') {$C'$};
\node at (4,0) (D') {$D'$};

\path[->] (C) edge (E);
\path[->] (D) edge (F);
\path[-] (E) edge (F);
\path[->] (C') edge (E);
\path[->] (D') edge (F);
\end{tikzpicture}
\\
\hline
\begin{tikzpicture}[inner sep=1mm]
\node at (0,0) (C) {$C$};
\node at (1,0) (D) {$D$};
\node at (1,-1) (E) {$E$};
\node at (2,-1) (F) {$F$};
\node at (3,0) (C') {$C'$};
\node at (4,0) (D') {$D'$};

\path[->] (C) edge (E);
\path[->] (D) edge (F);
\path[-] (E) edge (F);
\path[->] (C') edge (F);
\path[->] (D') edge (E);
\end{tikzpicture}
&&\\
\hline
\end{tabular}
\end{figure}
However, the first case implies that $C - D'$ is in $G[A \cup B \cup Z]^a$, which implies that replacing the subpath of $\rho$ between $C$ and $D'$ with $C - D'$ results in a path in $G[A \cup B \cup Z]^a$ connecting $A$ and $B$ given $Z$ that is shorter than $\rho$. This is a contradiction. Similarly for the fourth, sixth, seventh, eighth, ninth and tenth cases. And similarly for the rest of the cases by replacing the subpath of $\rho$ between $D$ and $D'$ with $D - D'$.

\item[Case 2] Suppose that $D = C'$. Then, one of the following configurations must exist in $G[A \cup B \cup Z]$.
\begin{figure}[H]
\centering
\begin{tabular}{|c|c|c|c|}
\hline
\begin{tikzpicture}[inner sep=1mm]
\node at (0,0) (C) {$C$};
\node at (1,0) (D) {$D$};
\node at (1,-1) (E) {$E$};
\node at (2,0) (D') {$D'$};

\path[->] (C) edge (E);
\path[-o] (D) edge (E);
\path[-o] (D') edge (E);
\end{tikzpicture}
&
\begin{tikzpicture}[inner sep=1mm]
\node at (0,0) (C) {$C$};
\node at (1,0) (D) {$D$};
\node at (1,-1) (E) {$E$};
\node at (2,0) (D') {$D'$};

\path[-] (C) edge (E);
\path[->] (D) edge (E);
\path[->] (D') edge (E);
\end{tikzpicture}
&
\begin{tikzpicture}[inner sep=1mm]
\node at (0,0) (C) {$C$};
\node at (1,0) (D) {$D$};
\node at (1,-1) (E) {$E$};
\node at (2,0) (D') {$D'$};

\path[-] (C) edge (E);
\path[->] (D) edge (E);
\path[-] (D') edge (E);
\end{tikzpicture}
&
\begin{tikzpicture}[inner sep=1mm]
\node at (0,0) (C) {$C$};
\node at (1,0) (D) {$D$};
\node at (0.5,-1) (E) {$E$};
\node at (1.5,-1) (F) {$F$};
\node at (2,0) (D') {$D'$};

\path[->] (C) edge (E);
\path[->] (D) edge (F);
\path[-] (E) edge (F);
\path[->] (D') edge (F);
\end{tikzpicture}
\\
\hline
\begin{tikzpicture}[inner sep=1mm]
\node at (0,0) (C) {$C$};
\node at (1,0) (D) {$D$};
\node at (0.5,-1) (E) {$E$};
\node at (1.5,-1) (F) {$F$};
\node at (2,0) (D') {$D'$};

\path[->] (C) edge (E);
\path[->] (D) edge (F);
\path[-] (E) edge (F);
\path[-] (D') edge (F);
\end{tikzpicture}
&
\begin{tikzpicture}[inner sep=1mm]
\node at (0,0) (C) {$C$};
\node at (1,0) (D) {$D$};
\node at (0,-1) (E) {$E$};
\node at (1,-1) (F) {$F$};
\node at (2,-1) (H) {$H$};
\node at (2,0) (D') {$D'$};

\path[->] (C) edge (E);
\path[->] (D) edge (F);
\path[-] (E) edge (F);
\path[->] (D') edge (H);
\path[-] (F) edge (H);
\end{tikzpicture}
&
\begin{tikzpicture}[inner sep=1mm]
\node at (0,0) (C) {$C$};
\node at (1,0) (D) {$D$};
\node at (0,-1) (E) {$E$};
\node at (1,-1) (F) {$F$};
\node at (2,0) (D') {$D'$};

\path[->] (C) edge (E);
\path[->] (D) edge (F);
\path[-] (E) edge (F);
\path[->] (D') edge (E);
\end{tikzpicture}
&\\
\hline
\end{tabular}
\end{figure}
However, the first case implies that $C - D'$ is in $G[A \cup B \cup Z]^a$, which implies that replacing the subpath of $\rho$ between $C$ and $D'$ with $C - D'$ results in a path in $G[A \cup B \cup Z]^a$ connecting $A$ and $B$ given $Z$ that is shorter than $\rho$. This is a contradiction. Similarly for the second, fourth and seventh cases. For the third, fifth and sixth cases, the third step of the lemma removes the repeated nodes. Specifically, it replaces $C - E \la D \ra E - D'$ with $C - E - D'$ in the third case, $E - F \la D \ra F - D'$ with $E - F - D'$ in the fifth case, and $E - F \la D \ra F - H$ with $E - F - H$ in the sixth case.

\end{description}

It only remains to prove that the collider paths added in the first step of the lemma have no nodes in common with $\rho$ except the endpoints. Suppose that $\rho$ has an augmented edge $C - D$. Then, one of the following configurations must exist in $G[A \cup B \cup Z]$.
\begin{figure}[H]
\centering
\begin{tabular}{|c|c|}
\hline
\begin{tikzpicture}[inner sep=1mm]
\node at (0,0) (C) {$C$};
\node at (1,0) (D) {$D$};
\node at (0.5,-1) (E) {$E$};

\path[->] (C) edge (E);
\path[-o] (D) edge (E);
\end{tikzpicture}
&
\begin{tikzpicture}[inner sep=1mm]
\node at (0,0) (C) {$C$};
\node at (1,0) (D) {$D$};
\node at (0,-1) (E) {$E$};
\node at (1,-1) (F) {$F$};

\path[->] (C) edge (E);
\path[->] (D) edge (F);
\path[-] (E) edge (F);
\end{tikzpicture}\\
\hline
\end{tabular}
\end{figure}
Consider the first case and suppose for a contradiction that $E$ occurs on $\rho$. Note that $E \notin Z$ because, otherwise, $\rho$ would not be connecting. Assume without loss of generality that $E$ occurs on $\rho$ before $C$ and $D$ because, otherwise, a symmetric argument applies. Then, replacing the subpath of $\rho$ between $E$ and $D$ with $E - D$ results in a path in $G[A \cup B \cup Z]^a$ connecting $A$ and $B$ given $Z$ that is shorter than $\rho$. This is a contradiction. Similarly for the second case. Specifically, assume without loss of generality that $E$ occurs on $\rho$ because, otherwise, a symmetric argument with $F$ applies. Note that $E \notin Z$ because, otherwise, $\rho$ would not be connecting. If $E$ occurs on $\rho$ after $C$ and $D$, then replace the subpath of $\rho$ between $C$ and $E$ with $C - E$. This results in a path in $G[A \cup B \cup Z]^a$ connecting $A$ and $B$ given $Z$ that is shorter than $\rho$, which is a contradiction. If $E$ occurs on $\rho$ before $C$ and $D$, then replace the subpath of $\rho$ between $E$ and $D$ with $E - D$. This results in a path in $G[A \cup B \cup Z]^a$ connecting $A$ and $B$ given $Z$ that is shorter than $\rho$, which is a contradiction.
\end{proof}

\begin{lemma}\label{lem:noncolliders}
Let $\rho$ denote a path in an ADMG $G$ connecting two nodes $A$ and $B$ given $Z$. The sequence of non-colliders on $\rho$ forms a path in $G[A \cup B \cup Z]^a$ between $A$ and $B$.
\end{lemma}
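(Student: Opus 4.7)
The plan is to enumerate the non-colliders of $\rho$ in order as $N_1 = A, N_2, \ldots, N_k = B$ and to establish two things: (a) each consecutive pair $N_i, N_{i+1}$ is adjacent in $G[A \cup B \cup Z]^a$, and (b) the $N_i$ are pairwise distinct. Distinctness is immediate from $\rho$ being a path in $G$. For adjacency I would consider the subpath $\sigma_i$ of $\rho$ from $N_i$ to $N_{i+1}$, whose interior consists solely of colliders of $\rho$ (possibly none), and show that $\sigma_i$ is a collider path in $G[A \cup B \cup Z]$; the augmentation then yields the edge $N_i - N_{i+1}$ in $G[A \cup B \cup Z]^a$, with the degenerate case of empty interior handled by the single edge being a trivial collider path.

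The main preparatory step is to certify that every node of $\rho$ lies in $Cc_G(An_G(A \cup B \cup Z))$ and every edge of $\rho$ lies in $G[A \cup B \cup Z]$. Colliders of $\rho$ sit in $An_G(Z) \subseteq An_G(A \cup B \cup Z)$ by the connecting condition, and the endpoints $A, B$ trivially sit in $A \cup B \cup Z$, so the nontrivial case is an intermediate non-collider $N$. If some edge at $N$ on $\rho$ is of the form $N \ra M$, I would chase outgoing arrows along $\rho$: $M$ carries an arrowhead on the edge to $N$, so if $M$ is a non-collider the collider definition forces the other edge at $M$ on $\rho$ to be directed out of $M$, and iterating produces a chain of outgoing arrows which must terminate at a collider or an endpoint, placing $N$ in $An_G(A \cup B \cup Z)$. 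Otherwise both edges at $N$ on $\rho$ are undirected, and I would trace undirected edges along $\rho$ from $N$ until reaching a collider, an endpoint, or a non-collider with an outgoing arrow; the halting node lies in $An_G(A \cup B \cup Z)$ by the previous part of the argument, and $N$ is connected to it by an undirected subpath of $\rho$, whence $N \in Cc_G(An_G(A \cup B \cup Z))$.

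With the nodes located, each directed edge of $\rho$ has both endpoints in $An_G(A \cup B \cup Z)$, since any node incident to a directed edge on $\rho$ must be an endpoint, a collider, or a non-collider with an outgoing arrow; each undirected edge of $\rho$ has both endpoints in $Cc_G(An_G(A \cup B \cup Z))$ by the preceding remarks; so every edge of $\rho$ lies in $G[A \cup B \cup Z]$. The subpath $\sigma_i$ is then a collider path in $G[A \cup B \cup Z]$: any two adjacent colliders on $\rho$ must be joined by an undirected edge, since any directed orientation would place a tail at one of them and contradict its collider status, and the edges incident to $N_i, N_{i+1}$ at the extremities of $\sigma_i$ end at the adjacent collider with an arrowhead or an undirected mark. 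Thus $N_i$ and $N_{i+1}$ are collider-connected in $G[A \cup B \cup Z]$, which together with distinctness yields the desired path in $G[A \cup B \cup Z]^a$.

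The step I expect to be the most delicate is the secondary trace used for intermediate non-colliders whose two incident edges on $\rho$ are both undirected: it nests a walk along $G^u$-edges inside the primary chase-outgoing-arrows argument, and relies on the paper's AMP-style collider/non-collider dichotomy (in which a subpath $A - C - B$ at $C$ counts as a non-collider) to guarantee that the walk terminates at a node already placed in $An_G(A \cup B \cup Z)$.
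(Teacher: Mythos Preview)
Your proposal is correct and follows essentially the same approach as the paper: first locate every node of $\rho$ in $Cc_G(An_G(A\cup B\cup Z))$ (the paper does this via maximal undirected subpaths whose endpoints are ancestors of colliders or of $A,B$, while you do it by a case split on node type with the same ``chase outgoing arrows'' argument), and then observe that the subpath between two successive non-colliders is a collider path in $G[A\cup B\cup Z]$. The only cosmetic difference is that the paper explicitly enumerates the four possible shapes of that subpath ($C\,\ot\,D$, $C\ra D$, $C\ra E\,\ot\,D$, $C\ra E - F\la D$), whereas you derive the same structure abstractly by noting that adjacent colliders must share an undirected edge.
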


\begin{proof}
Consider the maximal undirected subpaths of $\rho$. Note that each endpoint of each subpath is ancestor of a collider or endpoint of $\rho$, because $\rho$ is connecting. Thus, all the nodes on $\rho$ are in $G[A \cup B \cup Z]^a$. Suppose that $C$ and $D$ are two successive non-colliders on $\rho$. Then, the subpath of $\rho$ between $C$ and $D$ consists entirely of colliders. Specifically, the subpath is of the form $C \ot D$, $C \ra D$, $C \ra E \ot D$ or $C \ra E - F \la D$. Then, $C$ and $D$ are adjacent in $G[A \cup B \cup Z]^a$.
\end{proof}

\begin{proof}[\bf Proof of Theorem \ref{the:1}]
We start by proving the only if part. Let $\rho$ denote a path in $G$ connecting $A \in X$ and $B \in Y$ given $Z$. By Lemma \ref{lem:noncolliders} the non-colliders on $\rho$ form a path $\rho^a$ between $A$ and $B$ in $G[X \cup Y \cup Z]^a$. Since $\rho$ is connecting, every non-collider $C$ on $\rho$ is outside $Z$ unless $D - C - E$ is a subpath of $\rho$ and $Pa_G(C) \setminus Z \neq \emptyset$. In the latter case, $\rho^a$ has a subpath $D' - C - E'$ where $D'=D$ unless D is a collider on $\rho$, i.e. $D' \ra D - C$ is on $\rho$. Similarly for $E$ and $E'$. Therefore, we replace the subpath $D' - C - E'$ of $\rho^a$ with $D' - F - E'$ where $F \in Pa_G(C) \setminus Z$. Then, $\rho^a$ is connecting given $Z$. Note that $D' - F$ is in $G[X \cup Y \cup Z]^a$, because $D - C$ or $D' \ra D - C$ is in $\rho$ and $F \ra C$ is in $G$ with $C \in Z$. Similarly for $F - E'$.

To prove the if part, let $\rho^a$ denote a shortest path in $G[X \cup Y \cup Z]^a$ connecting $A \in X$ and $B \in Y$ given $Z$. We can transform $\rho^a$ into a path $\rho$ in $G$ as described in Lemma \ref{lem:colliders}. Since $\rho^a$ is connecting, no node on $\rho^a$ is in $Z$ and, thus, no non-collider on $\rho$ is in $Z$. Finally, since all the nodes on $\rho$ are in $G[X \cup Y \cup Z]^a$, it follows that every collider on $\rho$ is in $An_G(X \cup Y \cup Z)$. To see it, note that if $C - D$ is an augmented edge in $G[X \cup Y \cup Z]^a$ then the colliders on any collider path associated with $C - D$ are in $An_G(X \cup Y \cup Z)$. Thus, by Lemma \ref{lem:ancestors} there exist a node in $X$ and a node in $Y$ which are connected given $Z$ in $G$.
\end{proof}

\begin{proof}[\bf Proof of Theorem \ref{the:2}]
We prove the theorem for the following separation criterion, which is equivalent to criterion 2: A route is said to be connecting given $Z \subseteq V$ when
\begin{itemize}
\item every collider on the route is in $Z$, and

\item every non-collider $C$ on the route is outside $Z$ unless $A - C - B$ is a subroute and $Pa_G(C) \setminus Z \neq \emptyset$.
\end{itemize}

The only if part is trivial. To prove the if part, let $\rho$ denote a route in $G$ connecting $A$ and $B$ given $Z$. Let $C$ denote a node that occurs more than once in $\rho$. Consider the following cases.

\begin{description}
\item[Case 1] Assume that $\rho$ is of the form $A \ldots D \ra C \ldots C \ra E \ldots B$. Then, $C \notin Z$ for $\rho$ to be connecting given $Z$. Then, removing the subroute between the two occurrences of $C$ from $\rho$ results in the route $A \ldots D \ra C \ra E \ldots B$, which is connecting given $Z$.

\item[Case 2] Assume that $\rho$ is of the form $A \ldots D \ra C \ldots C \ot E \ldots B$. Then, $C \in An_G(Z)$ for $\rho$ to be connecting given $Z$. Then, removing the subroute between the two occurrences of $C$ from $\rho$ results in the route $A \ldots D \ra C \ot E \ldots B$, which is connecting given $Z$.

\item[Case 3] Assume that $\rho$ is of the form $A \ldots D \la C \ldots C \ldots B$. Then, $C \notin Z$ for $\rho$ to be connecting given $Z$. Then, removing the subroute between the two occurrences of $C$ from $\rho$ results in the route $A \ldots D \la C \ldots B$, which is connecting given $Z$.

\item[Case 4] Assume that $\rho$ is of the form $A \ldots D - C \ldots C \ra E \ldots B$. Then, $C \notin Z$ for $\rho$ to be connecting given $Z$. Then, removing the subroute between the two occurrences of $C$ from $\rho$ results in the route $A \ldots D - C \ra E \ldots B$, which is connecting given $Z$.

\item[Case 5] Assume that $\rho$ is of the form $A \ldots D - C \ldots C \la E \ldots B$. Then, $C \in An_G(Z)$ for $\rho$ to be connecting given $Z$. Then, removing the subroute between the two occurrences of $C$ from $\rho$ results in the route $A \ldots D - C \la E \ldots B$, which is connecting given $Z$.

\item[Case 6] Assume that $\rho$ is of the form $A \ldots D - C \ldots C - E \ldots B$ and $C \notin Z$. Then, removing the subroute between the two occurrences of $C$ from $\rho$ results in the route $A \ldots D - C - E \ldots B$, which is connecting given $Z$.

\item[Case 7] Assume that $\rho$ is of the form $A \ldots D - C \ldots C - E \ldots B$ and $C \in Z$. Then, $\rho$ must actually be of the form $A \ldots D - C \la F \ldots C - E \ldots B$ or $A \ldots D - C - F \ldots C - E \ldots B$. Note that in the former case $F \notin Z$ for $\rho$ to be connecting given $Z$. For the same reason, $Pa_G(C) \setminus Z \neq \emptyset$ in the latter case. Then, $Pa_G(C) \setminus Z \neq \emptyset$ in either case. Then, removing the subroute between the two occurrences of $C$ from $\rho$ results in the route $A \ldots D - C - E \ldots B$, which is connecting given $Z$.
\end{description}

Repeating the process above until no such node $C$ exists produces the desired path.
\end{proof}

\begin{proof}[\bf Proof of Theorem \ref{the:3}]
We start by proving the only if part. Suppose that there is path in $G[X \cup Y \cup Z]^a$ connecting a node in $X$ and a node in $Y$ given $Z$. We can then obtain a path in $G$ connecting $A \in X$ and $B \in Y$ given $Z$ as shown in the proof of Theorem \ref{the:1}. In this path, replace with $C - D$ every subpath $C - V_1 - \ldots - V_n - D$ such that $C, D \in An_G(X \cup Y \cup Z)$ and $V_1, \ldots, V_n \notin An_G(X \cup Y \cup Z)$. The result is a path in $G[X \cup Y \cup Z]^m$. Moreover, the path connects $A$ and $B$ given $Z$. To see it, note that the resulting and original paths have the same colliders, and the non-colliders on the resulting path are a subset of the non-colliders on the original path. Then, there is path in $(G[X \cup Y \cup Z]^m)^a$ connecting $A$ and $B$ given $Z$.

To prove the if part, suppose that there is path in $(G[X \cup Y \cup Z]^m)^a$ connecting $A \in X$ and $B \in Y$ given $Z$. Suppose that the path contains an edge $C - D$ that is not in $G[X \cup Y \cup Z]^a$. This is due to one the following reasons.

\begin{description}
\item[Case 1] $C - V_1 - \ldots - V_n - D$ is in $G[X \cup Y \cup Z]$ with $V_1, \ldots, V_n \notin An_G(X \cup Y \cup Z)$. Then, $C - V_1 - \ldots - V_n - D$ is in $G[X \cup Y \cup Z]^a$.

\item[Case 2] $C \ra E - D$ is in $G[X \cup Y \cup Z]^m$, which means that $C \ra E - V_1 - \ldots - V_n - D$ is in $G[X \cup Y \cup Z]$ with $V_1, \ldots, V_n \notin An_G(X \cup Y \cup Z)$. Then, $C - V_1 - \ldots - V_n - D$ is in $G[X \cup Y \cup Z]^a$.

\item[Case 3] $C \ra E - F \la D$ is in $G[X \cup Y \cup Z]^m$, which means that $C \ra E - V_1 - \ldots - V_n - F \la D$ is in $G[X \cup Y \cup Z]$ with $V_1, \ldots, V_n \notin An_G(X \cup Y \cup Z)$. Then, $C - V_1 - \ldots - V_n - D$ is in $G[X \cup Y \cup Z]^a$.
\end{description}

Either case implies that there is a path in $G[X \cup Y \cup Z]^a$ connecting $A \in X$ and $B \in Y$ given $Z$.
\end{proof}

\begin{proof}[\bf Proof of Theorem \ref{the:local}]
We start by proving the only if part. It suffices to note that every node that is adjacent to $B$ in $G[S]^a$ is in $Mb_{G[S]}(B)$, hence $B$ is separated from $S \setminus ( B \cup Mb_{G[S]}(B) )$ given $Mb_{G[S]}(B)$ in $G[S]^a$. Thus, $B \ci_p S \setminus ( B \cup Mb_{G[S]}(B) ) | Mb_{G[S]}(B)$ by the global Markov property.

To prove the if part, let $A$ be the node in $X \cup Y \cup Z$ that occurs the latest in $\prec$, and let $S = X \cup Y \cup Z$. Note that for all $B \in S$, the set of nodes that are adjacent to $B$ in $G[S]^a$ is precisely $Mb_{G[S]}(B)$. Then, the ordered local Markov property implies the global Markov property \citep[Theorem 3.7]{Lauritzen1996}.
\end{proof}

\begin{proof}[\bf Proof of Theorem \ref{the:pairwise}]
We start by proving the only if part. It suffices to note that if $B$ and $C$ are not adjacent in $G[S]^a$, then they are separated in $G[S]^a$ given $V(G[S]) \setminus ( B \cup C )$. Thus, $B \ci_p C | V(G[S]) \setminus ( B \cup C )$ by the global Markov property.

To prove the if part, let $A$ be the node in $X \cup Y \cup Z$ that occurs the latest in $\prec$, and let $S = X \cup Y \cup Z$. Then, the ordered pairwise Markov property implies the global Markov property \citep[Theorem 3.7]{Lauritzen1996}.
\end{proof}

\begin{proof}[\bf Proof of Theorem \ref{the:blockamp}]
First, C1$^*$ implies C3$^*$ by decomposition. Second, C1$^*$ implies C1 by taking $D=C$ and applying weak union. Third, C1 and the fact that $Nd_G(D) = Nd_G(C)$ imply $D \ci_p Nd_G(D) \setminus Cc_G(Pa_G(C)) | Cc_G(Pa_G(C))$ by symmetry and decomposition, which together with C3$^*$ imply C1$^*$ by contraction. Finally, C2 and C2$^*$ are equivalent because $p(C | Pa_G(C)) = p(C | Cc_G(Pa_G(C)))$ by C1$^*$ and decomposition.
\end{proof}

\begin{proof}[\bf Proof of Theorem \ref{the:localamp}]
To see the only if part, note that C1$^*$ with $D = C$ implies that $p(C | Nd_G(C)) = p(C | Pa_G(C))$. This implies that $p(C | Nd_G(C))$ satisfies the global Markov property with respect to $G_C$ by C2$^*$. This implies L1 \citep[Theorem 3.7]{Lauritzen1996}. Moreover, let $D = A \cup S$ and note that $Nd_G(D) = Nd_G(C)$. Then, L2$^*$ follows from C1$^*$ by symmetry and weak union.

To see the if part, note that L2$^*$ with $S = Ne_G(A)$ implies that $A \ci_p Nd_G(C) \setminus Pa_G(C) | Ne_G(A) \cup Pa_G(C)$ by weak union. This together with L1 imply $A \ci_p C \setminus (A \cup Ne_G(A)) | Ne_G(A) \cup Pa_G(C)$ by contraction and decomposition. This implies C2$^*$ \citep[Theorem 3.7]{Lauritzen1996}. Moreover, let $D = \{D_1, \ldots, D_n\}$. Then
\begin{enumerate}
\item $D_1 \ci_p Nd_G(C) \setminus Pa_G(D) | (D \setminus D_1) \cup Pa_G(D)$ by L2$^*$ with $A = D_1$ and $S = D \setminus D_1$.\label{eq:1}

\item $D_2 \ci_p Nd_G(C) \setminus Pa_G(D) | (D \setminus D_2) \cup Pa_G(D)$ by L2$^*$ with $A = D_2$ and $S = D \setminus D_2$.\label{eq:2}

\item $D_1 \cup D_2 \ci_p Nd_G(C) \setminus Pa_G(D) | (D \setminus (D_1 \cup D_2)) \cup Pa_G(D)$ by symmetry and intersection on (\ref{eq:1}) and (\ref{eq:2}).\label{eq:3}

\item $D_3 \ci_p Nd_G(C) \setminus Pa_G(D) | (D \setminus D_3) \cup Pa_G(D)$ by L2$^*$ with $A = D_3$ and $S = D \setminus D_3$.\label{eq:4}

\item $D_1 \cup D_2 \cup D_3 \ci_p Nd_G(C) \setminus Pa_G(D) | (D \setminus (D_1 \cup D_2 \cup D_3)) \cup Pa_G(D)$ by symmetry and intersection on (\ref{eq:3}) and (\ref{eq:4}).
\end{enumerate}
Continuing with this for $D_4, \ldots, D_n$ leads to C1$^*$.
\end{proof}

\begin{proof}[\bf Proof of Theorem \ref{the:pairwiseamp}]
To see the only if part, note that L1 and L2$^*$ imply P1 and P2$^*$ by weak union. To see the if part, let $Nd_G(C) \setminus Pa_G(A \cup S) = \{B_1, \ldots, B_n\}$. Then
\begin{enumerate}
\item $A \ci_p B_1 | S \cup Nd_G(C) \setminus B_1$ by P2$^*$ with $B = B_1$.\label{eq:1b}

\item $A \ci_p B_2 | S \cup Nd_G(C) \setminus B_2$ by P2$^*$ with $B = B_2$.\label{eq:2b}

\item $A \ci_p B_1 \cup B_2 | S \cup Nd_G(C) \setminus (B_1 \cup B_2)$ by intersection on (\ref{eq:1b}) and (\ref{eq:2b}).\label{eq:3b}

\item $A \ci_p B_3 | S \cup Nd_G(C) \setminus B_3$ by P2$^*$ with $B = B_3$.\label{eq:4b}

\item $A \ci_p B_1 \cup B_2 \cup B_3 | S \cup Nd_G(C) \setminus (B_1 \cup B_2 \cup B_3)$ by intersection on (\ref{eq:3b}) and (\ref{eq:4b}).
\end{enumerate}
Continuing with this for $B_4, \ldots, B_n$ leads to L2$^*$. Finally, let $C \setminus (A \cup Ne_G(A)) = \{B_1, \ldots, B_n\}$. Then
\begin{enumerate}\setcounter{enumi}{5}
\item $A \ci_p B_1 | Nd_G(C) \cup C \setminus (A \cup B_1)$ by P1 with $B = B_1$.\label{eq:1c}

\item $A \ci_p B_2 | Nd_G(C) \cup C \setminus (A \cup B_2)$ by P1 with $B = B_2$.\label{eq:2c}

\item $A \ci_p B_1 \cup B_2 | Nd_G(C) \cup C \setminus (A \cup B_1 \cup B_2)$ by intersection on (\ref{eq:1c}) and (\ref{eq:2c}).\label{eq:3c}

\item $A \ci_p B_3 | Nd_G(C) \cup C \setminus (A \cup B_3)$ by P1 with $B = B_3$.\label{eq:4c}

\item $A \ci_p B_1 \cup B_2 \cup B_3 | Nd_G(C) \setminus (B_1 \cup B_2 \cup B_3)$ by intersection on (\ref{eq:3c}) and (\ref{eq:4c}).
\end{enumerate}
Continuing with this for $B_4, \ldots, B_n$ leads to L1.
\end{proof}

\begin{proof}[{\bf Proof of Theorem \ref{the:GG'}}]
It suffices to show that every path in $G$ connecting $\alpha$ and $\beta$ given $Z$ can be transformed into a path in $G'$ connecting $\alpha$ and $\beta$ given $Z$ and vice versa, with $\alpha, \beta \in V$ and $Z \subseteq V \setminus ( \alpha \cup \beta )$.

Let $\rho$ denote a path in $G$ connecting $\alpha$ and $\beta$ given $Z$. We can easily transform $\rho$ into a path $\rho'$ in $G'$ between $\alpha$ and $\beta$: Simply, replace every maximal subpath of $\rho$ of the form $V_1 - V_2 - \ldots - V_{n-1} - V_n$ ($n \geq 2$) with $V_1 \la \epsilon_{V_1} - \epsilon_{V_2} - \ldots - \epsilon_{V_{n-1}} - \epsilon_{V_n} \ra V_n$. We now show that $\rho'$ is connecting given $Z$.

First, if $B \in V$ is a collider on $\rho'$, then $\rho'$ must have one of the following subpaths:

\begin{figure}[H]
\centering
\begin{tabular}{|c|c|c|}
\hline
\begin{tikzpicture}[inner sep=1mm]
\node at (0,0) (A) {$A$};
\node at (1,0) (B) {$B$};
\node at (2,0) (C) {$C$};
\path[->] (A) edge (B);
\path[<-] (B) edge (C);
\end{tikzpicture}
&
\begin{tikzpicture}[inner sep=1mm]
\node at (0,0) (A) {$A$};
\node at (1,0) (B) {$B$};
\node at (2,0) (C) {$\epsilon_B$};
\node at (3,0) (D) {$\epsilon_C$};
\path[->] (A) edge (B);
\path[<-] (B) edge (C);
\path[-] (D) edge (C);
\end{tikzpicture}
&
\begin{tikzpicture}[inner sep=1mm]
\node at (0,0) (A) {$\epsilon_B$};
\node at (1,0) (B) {$B$};
\node at (2,0) (C) {$C$};
\node at (-1,0) (D) {$\epsilon_A$};
\path[->] (A) edge (B);
\path[<-] (B) edge (C);
\path[-] (D) edge (A);
\end{tikzpicture}\\
\hline
\end{tabular}
\end{figure}

with $A, C \in V$. Therefore, $\rho$ must have one of the following subpaths (specifically, if $\rho'$ has the $i$-th subpath above, then $\rho$ has the $i$-th subpath below):

\begin{figure}[H]
\centering
\begin{tabular}{|c|c|c|}
\hline
\begin{tikzpicture}[inner sep=1mm]
\node at (0,0) (A) {$A$};
\node at (1,0) (B) {$B$};
\node at (2,0) (C) {$C$};
\path[->] (A) edge (B);
\path[<-] (B) edge (C);
\end{tikzpicture}
&
\begin{tikzpicture}[inner sep=1mm]
\node at (0,0) (A) {$A$};
\node at (1,0) (B) {$B$};
\node at (2,0) (C) {$C$};
\path[->] (A) edge (B);
\path[-] (B) edge (C);
\end{tikzpicture}
&
\begin{tikzpicture}[inner sep=1mm]
\node at (0,0) (A) {$A$};
\node at (1,0) (B) {$B$};
\node at (2,0) (C) {$C$};
\path[-] (A) edge (B);
\path[<-] (B) edge (C);
\end{tikzpicture}\\
\hline
\end{tabular}
\end{figure}

In either case, $B$ is a collider on $\rho$ and, thus, $B \in An_G(Z)$ for $\rho$ to be connecting given $Z$. Then, $B \in An_{G'}(Z)$ by construction of $G'$ and, thus, $B \in An_{G'}(Dt(Z))$.

Second, if $B \in V$ is a non-collider on $\rho'$, then $\rho'$ must have one of the following subpaths:

\begin{figure}[H]
\centering
\begin{tabular}{|c|c|c|c|c|}
\hline
\begin{tikzpicture}[inner sep=1mm]
\node at (0,0) (A) {$A$};
\node at (1,0) (B) {$B$};
\node at (2,0) (C) {$C$};
\path[->] (A) edge (B);
\path[->] (B) edge (C);
\end{tikzpicture}
&
\begin{tikzpicture}[inner sep=1mm]
\node at (0,0) (A) {$A$};
\node at (1,0) (B) {$B$};
\node at (2,0) (C) {$C$};
\path[<-] (A) edge (B);
\path[->] (B) edge (C);
\end{tikzpicture}
&
\begin{tikzpicture}[inner sep=1mm]
\node at (0,0) (A) {$A$};
\node at (1,0) (B) {$B$};
\node at (2,0) (C) {$C$};
\path[<-] (A) edge (B);
\path[<-] (B) edge (C);
\end{tikzpicture}
&
\begin{tikzpicture}[inner sep=1mm]
\node at (0,0) (A) {$A$};
\node at (1,0) (B) {$B$};
\node at (2,0) (C) {$\epsilon_B$};
\node at (3,0) (D) {$\epsilon_C$};
\path[<-] (A) edge (B);
\path[<-] (B) edge (C);
\path[-] (D) edge (C);
\end{tikzpicture}
&
\begin{tikzpicture}[inner sep=1mm]
\node at (0,0) (A) {$\epsilon_B$};
\node at (1,0) (B) {$B$};
\node at (2,0) (C) {$C$};
\node at (-1,0) (D) {$\epsilon_A$};
\path[->] (A) edge (B);
\path[->] (B) edge (C);
\path[-] (D) edge (A);
\end{tikzpicture}\\
\hline
\end{tabular}
\end{figure}

with $A, C \in V$. Therefore, $\rho$ must have one of the following subpaths (specifically, if $\rho'$ has the $i$-th subpath above, then $\rho$ has the $i$-th subpath below):

\begin{figure}[H]
\centering
\begin{tabular}{|c|c|c|c|c|}
\hline
\begin{tikzpicture}[inner sep=1mm]
\node at (0,0) (A) {$A$};
\node at (1,0) (B) {$B$};
\node at (2,0) (C) {$C$};
\path[->] (A) edge (B);
\path[->] (B) edge (C);
\end{tikzpicture}
&
\begin{tikzpicture}[inner sep=1mm]
\node at (0,0) (A) {$A$};
\node at (1,0) (B) {$B$};
\node at (2,0) (C) {$C$};
\path[<-] (A) edge (B);
\path[->] (B) edge (C);
\end{tikzpicture}
&
\begin{tikzpicture}[inner sep=1mm]
\node at (0,0) (A) {$A$};
\node at (1,0) (B) {$B$};
\node at (2,0) (C) {$C$};
\path[<-] (A) edge (B);
\path[<-] (B) edge (C);
\end{tikzpicture}
&
\begin{tikzpicture}[inner sep=1mm]
\node at (0,0) (A) {$A$};
\node at (1,0) (B) {$B$};
\node at (2,0) (C) {$C$};
\path[<-] (A) edge (B);
\path[-] (B) edge (C);
\end{tikzpicture}
&
\begin{tikzpicture}[inner sep=1mm]
\node at (0,0) (A) {$A$};
\node at (1,0) (B) {$B$};
\node at (2,0) (C) {$C$};
\path[-] (A) edge (B);
\path[->] (B) edge (C);
\end{tikzpicture}\\
\hline
\end{tabular}
\end{figure}

In either case, $B$ is a non-collider on $\rho$ and, thus, $B \notin Z$ for $\rho$ to be connecting given $Z$. Since $Z$ contains no error node, $Z$ cannot determine any node in $V$ that is not already in $Z$. Then, $B \notin Dt(Z)$.

Third, if $\epsilon_B$ is a non-collider on $\rho'$ (note that $\epsilon_B$ cannot be a collider on $\rho'$), then $\rho'$ must have one of the following subpaths:

\begin{figure}[H]
\centering
\begin{tabular}{|c|c|c|c|}
\hline
\begin{tikzpicture}[inner sep=1mm]
\node at (0,0) (A) {$A$};
\node at (1,0) (B) {$B$};
\node at (2,0) (C) {$\epsilon_B$};
\node at (3,0) (D) {$\epsilon_C$};
\path[->] (A) edge (B);
\path[<-] (B) edge (C);
\path[-] (D) edge (C);
\end{tikzpicture}
&
\begin{tikzpicture}[inner sep=1mm]
\node at (0,0) (A) {$\epsilon_B$};
\node at (1,0) (B) {$B$};
\node at (2,0) (C) {$C$};
\node at (-1,0) (D) {$\epsilon_A$};
\path[->] (A) edge (B);
\path[<-] (B) edge (C);
\path[-] (D) edge (A);
\end{tikzpicture}
&
\begin{tikzpicture}[inner sep=1mm]
\node at (0.65,0) (B) {$\alpha=B$};
\node at (2,0) (C) {$\epsilon_B$};
\node at (3,0) (D) {$\epsilon_C$};
\path[<-] (B) edge (C);
\path[-] (D) edge (C);
\end{tikzpicture}
&
\begin{tikzpicture}[inner sep=1mm]
\node at (0,0) (A) {$\epsilon_B$};
\node at (1.35,0) (B) {$B=\beta$};
\node at (-1,0) (D) {$\epsilon_A$};
\path[->] (A) edge (B);
\path[-] (D) edge (A);
\end{tikzpicture}
\\
\hline
\begin{tikzpicture}[inner sep=1mm]
\node at (0,0) (A) {$A$};
\node at (1,0) (B) {$B$};
\node at (2,0) (C) {$\epsilon_B$};
\node at (3,0) (D) {$\epsilon_C$};
\path[<-] (A) edge (B);
\path[<-] (B) edge (C);
\path[-] (D) edge (C);
\end{tikzpicture}
&
\begin{tikzpicture}[inner sep=1mm]
\node at (0,0) (A) {$\epsilon_B$};
\node at (1,0) (B) {$B$};
\node at (2,0) (C) {$C$};
\node at (-1,0) (D) {$\epsilon_A$};
\path[->] (A) edge (B);
\path[->] (B) edge (C);
\path[-] (D) edge (A);
\end{tikzpicture}
&
\begin{tikzpicture}[inner sep=1mm]
\node at (0,0) (A) {$\epsilon_A$};
\node at (1,0) (B) {$\epsilon_B$};
\node at (2,0) (C) {$\epsilon_C$};
\path[-] (A) edge (B);
\path[-] (B) edge (C);
\end{tikzpicture}
&\\
\hline
\end{tabular}
\end{figure}

with $A, C \in V$. Recall that $\epsilon_B \notin Z$ because $Z \subseteq V \setminus ( \alpha \cup \beta )$. In the first case, if $\alpha=A$ then $A \notin Z$, else $A \notin Z$ for $\rho$ to be connecting given $Z$. Then, $\epsilon_B \notin Dt(Z)$. In the second case, if $\beta=C$ then $C \notin Z$, else $C \notin Z$ for $\rho$ to be connecting given $Z$. Then, $\epsilon_B \notin Dt(Z)$. In the third and fourth cases, $B \notin Z$ because $\alpha=B$ or $\beta=B$. Then, $\epsilon_B \notin Dt(Z)$. In the fifth and sixth cases, $B \notin Z$ for $\rho$ to be connecting given $Z$. Then, $\epsilon_B \notin Dt(Z)$. The last case implies that $\rho$ has the following subpath:

\begin{figure}[H]
\centering
\begin{tabular}{|c|}
\hline
\begin{tikzpicture}[inner sep=1mm]
\node at (0,0) (A) {$A$};
\node at (1,0) (B) {$B$};
\node at (2,0) (C) {$C$};
\path[-] (A) edge (B);
\path[-] (B) edge (C);
\end{tikzpicture}\\
\hline
\end{tabular}
\end{figure}

Thus, $B$ is a non-collider on $\rho$, which implies that $B \notin Z$ or $Pa_G(B) \setminus Z \neq \emptyset$ for $\rho$ to be connecting given $Z$. In either case, $\epsilon_B \notin Dt(Z)$ (recall that $Pa_{G'}(B)=Pa_G(B) \cup \epsilon_B$ by construction of $G'$).

Finally, let $\rho'$ denote a path in $G'$ connecting $\alpha$ and $\beta$ given $Z$. We can easily transform $\rho'$ into a path $\rho$ between $\alpha$ and $\beta$: Simply, replace every maximal subpath of $\rho'$ of the form $V_1 \la \epsilon_{V_1} - \epsilon_{V_2} - \ldots - \epsilon_{V_{n-1}} - \epsilon_{V_n} \ra V_n$ ($n \geq 2$) with $V_1 - V_2 - \ldots - V_{n-1} - V_n$. We now show that $\rho$ is connecting given $Z$.

First, note that all the nodes in $\rho$ are in $V$. Moreover, if $B$ is a collider on $\rho$, then $\rho$ must have one of the following subpaths:

\begin{figure}[H]
\centering
\begin{tabular}{|c|c|c|}
\hline
\begin{tikzpicture}[inner sep=1mm]
\node at (0,0) (A) {$A$};
\node at (1,0) (B) {$B$};
\node at (2,0) (C) {$C$};
\path[->] (A) edge (B);
\path[<-] (B) edge (C);
\end{tikzpicture}
&
\begin{tikzpicture}[inner sep=1mm]
\node at (0,0) (A) {$A$};
\node at (1,0) (B) {$B$};
\node at (2,0) (C) {$C$};
\path[->] (A) edge (B);
\path[-] (B) edge (C);
\end{tikzpicture}
&
\begin{tikzpicture}[inner sep=1mm]
\node at (0,0) (A) {$A$};
\node at (1,0) (B) {$B$};
\node at (2,0) (C) {$C$};
\path[-] (A) edge (B);
\path[<-] (B) edge (C);
\end{tikzpicture}\\
\hline
\end{tabular}
\end{figure}

with $A, C \in V$. Therefore, $\rho'$ must have one of the following subpaths (specifically, if $\rho$ has the $i$-th subpath above, then $\rho'$ has the $i$-th subpath below):

\begin{figure}[H]
\centering
\begin{tabular}{|c|c|c|}
\hline
\begin{tikzpicture}[inner sep=1mm]
\node at (0,0) (A) {$A$};
\node at (1,0) (B) {$B$};
\node at (2,0) (C) {$C$};
\path[->] (A) edge (B);
\path[<-] (B) edge (C);
\end{tikzpicture}
&
\begin{tikzpicture}[inner sep=1mm]
\node at (0,0) (A) {$A$};
\node at (1,0) (B) {$B$};
\node at (2,0) (C) {$\epsilon_B$};
\node at (3,0) (D) {$\epsilon_C$};
\path[->] (A) edge (B);
\path[<-] (B) edge (C);
\path[-] (D) edge (C);
\end{tikzpicture}
&
\begin{tikzpicture}[inner sep=1mm]
\node at (0,0) (A) {$\epsilon_B$};
\node at (1,0) (B) {$B$};
\node at (2,0) (C) {$C$};
\node at (-1,0) (D) {$\epsilon_A$};
\path[->] (A) edge (B);
\path[<-] (B) edge (C);
\path[-] (D) edge (A);
\end{tikzpicture}\\
\hline
\end{tabular}
\end{figure}

In either case, $B$ is a collider on $\rho'$ and, thus, $B \in An_{G'}(Dt(Z))$ for $\rho'$ to connecting given $Z$. Since $Z$ contains no error node, $Z$ cannot determine any node in $V$ that is not already in $Z$. Then, $B \in Dt(Z)$ iff $B \in Z$. Since no error node is a descendant of $B$, then any node $D \in Dt(Z)$ that is a descendant of $B$ must be in $V$ which, as seen, implies that $D \in Z$. Then, $B \in An_{G'}(Dt(Z))$ iff $B \in An_{G'}(Z)$. Moreover, $B \in An_{G'}(Z)$ iff $B \in An_{G}(Z)$ by construction of $G'$. These results together imply that $B \in An_{G}(Z)$.

Second, if $B$ is a non-collider on $\rho$, then $\rho$ must have one of the following subpaths:

\begin{figure}[H]
\centering
\begin{tabular}{|c|c|c|c|c|}
\hline
\begin{tikzpicture}[inner sep=1mm]
\node at (0,0) (A) {$A$};
\node at (1,0) (B) {$B$};
\node at (2,0) (C) {$C$};
\path[->] (A) edge (B);
\path[->] (B) edge (C);
\end{tikzpicture}
&
\begin{tikzpicture}[inner sep=1mm]
\node at (0,0) (A) {$A$};
\node at (1,0) (B) {$B$};
\node at (2,0) (C) {$C$};
\path[<-] (A) edge (B);
\path[->] (B) edge (C);
\end{tikzpicture}
&
\begin{tikzpicture}[inner sep=1mm]
\node at (0,0) (A) {$A$};
\node at (1,0) (B) {$B$};
\node at (2,0) (C) {$C$};
\path[<-] (A) edge (B);
\path[<-] (B) edge (C);
\end{tikzpicture}
&
\begin{tikzpicture}[inner sep=1mm]
\node at (0,0) (A) {$A$};
\node at (1,0) (B) {$B$};
\node at (2,0) (C) {$C$};
\path[<-] (A) edge (B);
\path[-] (B) edge (C);
\end{tikzpicture}
&
\begin{tikzpicture}[inner sep=1mm]
\node at (0,0) (A) {$A$};
\node at (1,0) (B) {$B$};
\node at (2,0) (C) {$C$};
\path[-] (A) edge (B);
\path[->] (B) edge (C);
\end{tikzpicture}
\\
\hline
\begin{tikzpicture}[inner sep=1mm]
\node at (0,0) (A) {$A$};
\node at (1,0) (B) {$B$};
\node at (2,0) (C) {$C$};
\path[-] (A) edge (B);
\path[-] (B) edge (C);
\end{tikzpicture}
&&&&\\
\hline
\end{tabular}
\end{figure}

with $A, C \in V$. Therefore, $\rho'$ must have one of the following subpaths (specifically, if $\rho$ has the $i$-th subpath above, then $\rho'$ has the $i$-th subpath below):

\begin{figure}[H]
\centering
\begin{tabular}{|c|c|c|c|c|}
\hline
\begin{tikzpicture}[inner sep=1mm]
\node at (0,0) (A) {$A$};
\node at (1,0) (B) {$B$};
\node at (2,0) (C) {$C$};
\path[->] (A) edge (B);
\path[->] (B) edge (C);
\end{tikzpicture}
&
\begin{tikzpicture}[inner sep=1mm]
\node at (0,0) (A) {$A$};
\node at (1,0) (B) {$B$};
\node at (2,0) (C) {$C$};
\path[<-] (A) edge (B);
\path[->] (B) edge (C);
\end{tikzpicture}
&
\begin{tikzpicture}[inner sep=1mm]
\node at (0,0) (A) {$A$};
\node at (1,0) (B) {$B$};
\node at (2,0) (C) {$C$};
\path[<-] (A) edge (B);
\path[<-] (B) edge (C);
\end{tikzpicture}
&
\begin{tikzpicture}[inner sep=1mm]
\node at (0,0) (A) {$A$};
\node at (1,0) (B) {$B$};
\node at (2,0) (C) {$\epsilon_B$};
\node at (3,0) (D) {$\epsilon_C$};
\path[<-] (A) edge (B);
\path[<-] (B) edge (C);
\path[-] (D) edge (C);
\end{tikzpicture}
&
\begin{tikzpicture}[inner sep=1mm]
\node at (0,0) (A) {$\epsilon_B$};
\node at (1,0) (B) {$B$};
\node at (2,0) (C) {$C$};
\node at (-1,0) (D) {$\epsilon_A$};
\path[->] (A) edge (B);
\path[->] (B) edge (C);
\path[-] (D) edge (A);
\end{tikzpicture}
\\
\hline
\begin{tikzpicture}[inner sep=1mm]
\node at (0,0) (A) {$\epsilon_A$};
\node at (1,0) (B) {$\epsilon_B$};
\node at (2,0) (C) {$\epsilon_C$};
\path[-] (A) edge (B);
\path[-] (B) edge (C);
\end{tikzpicture}
&&&&\\
\hline
\end{tabular}
\end{figure}

In the first five cases, $B$ is a non-collider on $\rho'$ and, thus, $B \notin Dt(Z)$ for $\rho'$ to be connecting given $Z$. Since $Z$ contains no error node, $Z$ cannot determine any node in $V$ that is not already in $Z$. Then, $B \notin Z$. In the last case, $\epsilon_B$ is a non-collider on $\rho'$ and, thus, $\epsilon_B \notin Dt(Z)$ for $\rho'$ to be connecting given $Z$. Then, $B \notin Z$ or $Pa_{G'}(B) \setminus ( \epsilon_B \cup Z ) \ \neq \emptyset$. Then, $B \notin Z$ or $Pa_{G}(B) \setminus Z \ \neq \emptyset$ (recall that $Pa_{G'}(B)=Pa_G(B) \cup \epsilon_B$ by construction of $G'$).
\end{proof}

\begin{proof}[\bf Proof of Theorem \ref{the:gaussian}]
Modify the equation $A = \beta_A Pa_G(A) + \epsilon_A$ by replacing each $B \in V$ in the right-hand side of the equation with the right-hand side of the equation of $B$, i.e. $\beta_B Pa_G(B) + \epsilon_B$. Since $G$ is directed acyclic, repeating this process results in a set of equations for the elements of $V$ whose right-hand sides are linear combinations of the elements of $\epsilon$. In other words, $V = \delta \epsilon$ with $\epsilon \sim \mathcal{N}(0, \Lambda)$. Then, $V \sim \mathcal{N}(0, \delta \Lambda \delta^T)$.
\end{proof}

\begin{proof}[\bf Proof of Theorem \ref{the:markovian}]
Equation (\ref{eq:equation}) implies for any $A \in V$ that
\[
A \ci_{p(V \cup \epsilon)} (V \cup \epsilon) \setminus (A \cup Pa_{G'}(A)) | Pa_{G'}(A)
\]
and thus
\begin{equation}\label{eq:equation3}
A \ci_{p(V \cup \epsilon)} Nd_{G'}(A) \setminus Pa_{G'}(A) | Pa_{G'}(A)
\end{equation}
by decomposition.

Moreover, Equation (\ref{eq:equation}) implies for any $\epsilon_C \in Cc(G'_{\epsilon})$ that
\begin{equation}\label{eq:equation4}
p(\epsilon_C \cup Nd_{G'}(\epsilon_C)) = \prod_{K \in Cc(G'_{\epsilon})} p(\epsilon_K) \prod_{A \in Nd_{G'}(\epsilon_C) \cap V} \beta_A Pa_{G'}(A) = p(\epsilon_C) p(Nd_{G'}(\epsilon_C))
\end{equation}
and thus
\[
\epsilon_C \ci_{p(V \cup \epsilon)} Nd_{G'}(\epsilon_C) | \emptyset
\]
and thus
\begin{equation}\label{eq:equation5}
\epsilon_A \ci_{p(V \cup \epsilon)} Nd_{G'}(\epsilon_C) | \epsilon_Z
\end{equation}
where $\epsilon_A \in \epsilon_C$ and $\epsilon_Z \subseteq \epsilon_C \setminus \epsilon_A$, by decomposition and weak union.

Finally, Equation (\ref{eq:equation2}) implies for any $\epsilon_A \in \epsilon_C$ and $\epsilon_C \in Cc(G'_{\epsilon})$ that
\[
\epsilon_A \ci_{p(V \cup \epsilon)} \epsilon_C \setminus (\epsilon_A \cup Ne_{G'}(\epsilon_A)) | Ne_{G'}(\epsilon_A)
\]
by \citet[Theorem 3.7 and Proposition 5.2]{Lauritzen1996}, and thus
\[
p(\epsilon_C) = h(\epsilon_A \cup Ne_{G'}(\epsilon_A)) k(\epsilon_C \setminus \epsilon_A)
\]
by \citet[Equation 3.6]{Lauritzen1996}. This together with Equation (\ref{eq:equation4}) imply that
\[
p(\epsilon_C \cup Nd_{G'}(\epsilon_C)) = h(\epsilon_A \cup Ne_{G'}(\epsilon_A)) k(\epsilon_C \setminus \epsilon_A) p(Nd_{G'}(\epsilon_C))
\]
and thus
\begin{equation}\label{eq:equation6}
\epsilon_A \ci_{p(V \cup \epsilon)} \epsilon_C \setminus (\epsilon_A \cup Ne_{G'}(\epsilon_A)) | Nd_{G'}(\epsilon_C) Ne_{G'}(\epsilon_A)
\end{equation}
by \citet[Equation 3.6]{Lauritzen1996}.

Consequently, Equations (\ref{eq:equation3}), (\ref{eq:equation5}) and (\ref{eq:equation6}) imply that $p(V \cup \epsilon)$ satisfies the global Markov property with respect to $G'$ by Theorem \ref{the:localamp} because (i) $G'$ is actually an AMP CG over $V \cup \epsilon$, (ii) $A$ is the only node in the connectivity component of $G'$ that contains $A$, and (iii) $\epsilon_C$ has no parents in $G'$. Then, $p(V)$ satisfies the global Markov property with respect to $G$, because $G$ and $G'$ represent the same separations over $V$ by Theorem \ref{the:GG'}.
\end{proof}

\end{document}